\theoremstyle{plain}
\newtheorem{theorem}{Theorem}[section]
\newtheorem{lemma}[theorem]{Lemma}
\theoremstyle{definition}
\newtheorem{definition}[theorem]{Definition}
\theoremstyle{remark}
\newcommand{\indep}{\perp\!\!\!\perp}
\newcommand{\calY}{\mathcal{Y}}
\def\eqref#1{equation~\ref{#1}}
\def\1{\bm{1}}
\def\rvb{{\mathbf{b}}}
\def\rvc{{\mathbf{c}}}
\def\rve{{\mathbf{e}}}
\def\rvx{{\mathbf{x}}}
\def\rvy{{\mathbf{y}}}
\def\rvz{{\mathbf{z}}}
\def\vz{{\bm{z}}}
\DeclareMathAlphabet{\mathsfit}{\encodingdefault}{\sfdefault}{m}{sl}
\SetMathAlphabet{\mathsfit}{bold}{\encodingdefault}{\sfdefault}{bx}{n}
\def\gX{{\mathcal{X}}}
\newcommand{\R}{\mathbb{R}}
\DeclareMathOperator{\logit}{logit}
\icmltitlerunning{Should Bias be Eliminated? A General 
Framework to Use Bias for OOD Generalization}
\begin{document}

\twocolumn[
  \icmltitle{Should Bias be Eliminated? A General Framework to Use Bias \\ for OOD Generalization}

  % It is OKAY to include author information, even for blind submissions: the
  % style file will automatically remove it for you unless you've provided
  % the [accepted] option to the icml2026 package.

  % List of affiliations: The first argument should be a (short) identifier you
  % will use later to specify author affiliations Academic affiliations
  % should list Department, University, City, Region, Country Industry
  % affiliations should list Company, City, Region, Country

  % You can specify symbols, otherwise they are numbered in order. Ideally, you
  % should not use this facility. Affiliations will be numbered in order of
  % appearance and this is the preferred way.
  \icmlsetsymbol{equal}{*}

  \begin{icmlauthorlist}
    \icmlauthor{Yan Li}{yyy}
    \icmlauthor{Yunlong Deng}{yyy}
    \icmlauthor{Zijian Li}{comp,yyy}
    \icmlauthor{Anpeng Wu}{yyy,zju}
    \icmlauthor{Zeyu Tang}{comp}
    \icmlauthor{Kun Zhang}{comp,yyy}
    \icmlauthor{ Guangyi Chen}{comp,yyy}

  \end{icmlauthorlist}

  \icmlaffiliation{yyy}{Mohamed bin Zayed University of Artificial Intelligence}
  \icmlaffiliation{comp}{Carnegie Mellon University}
   \icmlaffiliation{zju}{Zhejiang University}
  \icmlcorrespondingauthor{Guangyi Chen}{guangyichen1994@gmail.com}
  \icmlkeywords{Machine Learning, ICML}
  \vskip 0.3in
]

% this must go after the closing bracket ] following \twocolumn[ ...

% This command actually creates the footnote in the first column listing the
% affiliations and the copyright notice. The command takes one argument, which
% is text to display at the start of the footnote. The \icmlEqualContribution
% command is standard text for equal contribution. Remove it (just {}) if you
% do not need this facility.

% Use ONE of the following lines. DO NOT remove the command.
% If you have no special notice, KEEP empty braces:
\printAffiliationsAndNotice{}  % no special notice (required even if empty)
% Or, if applicable, use the standard equal contribution text:
% \printAffiliationsAndNotice{\icmlEqualContribution}

\begin{abstract}

\label{abstract}
Most approaches to out-of-distribution (OOD) generalization learn domain-invariant representations by discarding contextual bias. In this paper, we raise a critical question: \emph{ \textbf{Should bias be eliminated}}? If not, is there a general way to leverage bias for better OOD generalization? To answer these questions, we first provide a theoretical analysis that characterizes the circumstances \textbf{in which biased features contribute positively}. Although theoretical results show that bias may sometimes play a positive role, leveraging it effectively is non-trivial, since its harmful and beneficial components are often entangled. Recent advances have sought to refine the prediction of bias by presuming reliable predictions from invariant features. However, such assumptions may be too strong in the real world, especially when the \textbf{target also shifts} from training to testing domains. Motivated by this challenge, we introduce a framework to leverage bias in a more general scenario. Specifically, we employ a generative model to capture the data generation process and {identify the underlying bias factors}, which are then used to construct a bias-aware predictor. Since the bias-aware predictor may shift across environments, we first estimate the environment state to train predictors under different environments, combining them as a \textbf{mixture of domain experts} for the final prediction. Then, we {build a general invariant predictor, which can be invariant under label shift} to guide the adaptation of the bias-aware predictor. Evaluations on synthetic data and standard domain generalization benchmarks demonstrate that our method \textbf{consistently outperforms} both invariance only baselines, recent bias utilization approaches and advanced baselines, yielding improved robustness and adaptability.

\end{abstract}

%%%%%%%%%%%%%%%%%%%%%%%%%%%%%%%%%%%%%%%%%%%%%%%%%%%%%%%%%%%%%%%%%%%%%%%%%%%%%%%
%%%%%%%%%%%%%%%%%%%%%%%%%%%%%%%%%%%%%%%%%%%%%%%%%%%%%%%%%%%%%%%%%%%%%%%%%%%%%%%
\vspace{-0.2cm}
\section{Introduction}
\label{introduction}

\vspace{-0.1cm}
A widely used example of data bias is the ``cow vs.\ camel classification'' problem. As illustrated in Figure~\ref{fig:top}(a), models trained on imbalanced datasets may rely on biased background features for prediction. Specifically, cows are typically photographed in grassy fields, while camels appear in desert settings. Consequently, the model may mistakenly learn to associate the background context, rather than the animal itself, with the class label. From a causal perspective, this issue arises from spurious correlations, where the background acts as a confounding variable influencing both the images and the associated label, as shown in Figure~\ref{fig:top}(b).

\begin{figure*}[tb]
    \centering
    \vspace{-0.1cm}
    \includegraphics[width=\textwidth]{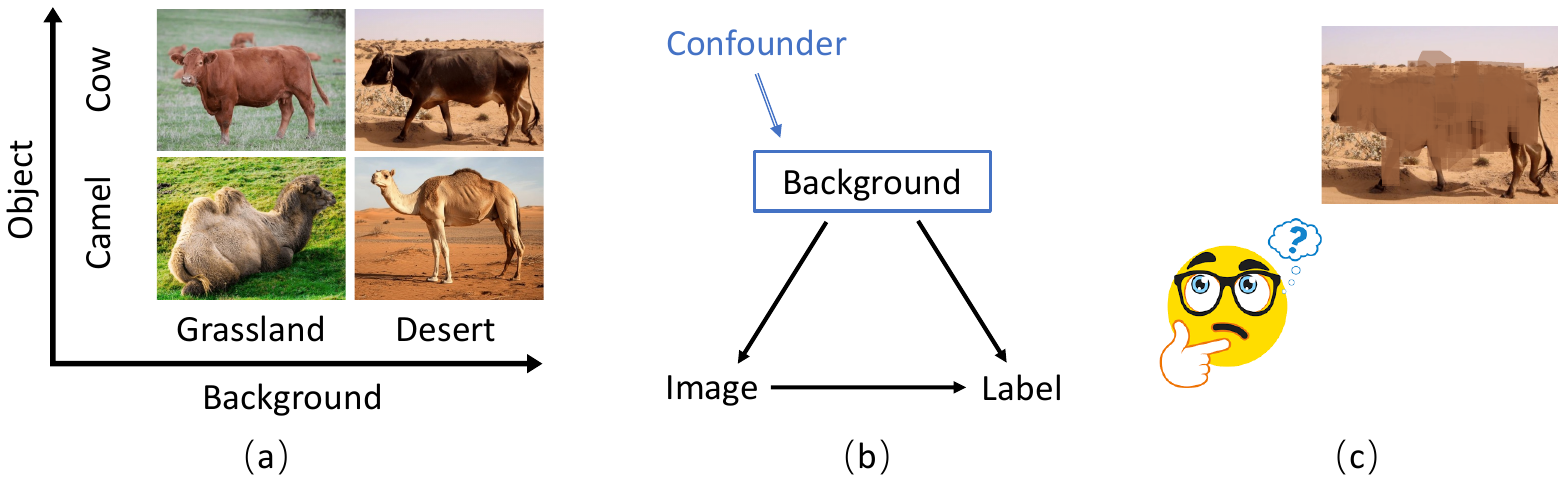}
    \vspace{-0.6cm}
    \caption{\textbf{Illustration of data bias in the ``Cow vs.\ Camel Classification'' problem.} (a) An example of the Cow vs.\ Camel classification task, where both cows and camels are observed in distinct backgrounds, such as grasslands and deserts. (b) A causal graph illustrating spurious correlations introduced by confounders. (c) An intuitive example demonstrating how humans classify an image with ambiguous content as either a cow or a camel.}
    \label{fig:top}
    \vspace{-0.5cm}
\end{figure*}

Most OOD generalization methods, such as IRM~\citep{irm}, DANN~\citep{dann}, and Self-Training~\citep{lee2013pseudo}, regard data bias as the primary obstacle and therefore aim to learn domain-invariant representations (see~\cref{related_work} for further discussion). The core assumption of these methods is that biased features are uninformative and that only invariant features should be transferred. As illustrated in Figure~\ref{fig:top}(c), when an image is ambiguous (e.g., cow vs. camel), humans often rely on background context to make an educated guess. This intuitively suggests that, rather than being entirely detrimental, in some scenarios, the bias derived from background information can serve as a useful reference in the prediction process. 

To analyze the effect of bias, we provide a theoretical framework that characterizes when data bias is identifiable and beneficial for prediction. Our results show that, in general, biased features can be exploited when they can be disentangled from invariant content, and the dependence between biased features and labels is not screened off by invariant content. It indicates that we can disentangle the bias component and study its predictive contribution, enabling bias features to provide complementary information that can be effectively exploited at inference.

Although bias contains useful information, using it for prediction is challenging. The most relevant prior work that leverages bias for OOD generalization is {Stable Feature Boosting (SFB)} ~\citep{eastwood2024spuriosity}. SFB splits the representation into invariant and biased components, using the predictions based on the invariant features as pseudo-labels for test-time adaptation of the biased component.
However, this method relies on the assumption that invariant features alone can yield accurate and robust predictions. In particular, it presumes that (1) invariant features contain sufficient information to support prediction, and (2) the prediction target itself remains invariant, i.e., no label shift~\citep{lipton2018labelshift} occurs. In real-world applications, the invariant content may be under-learned due to limited domain heterogeneity, insufficient supervision, or an intrinsic lack of information. Meanwhile, the prediction using the invariant feature may shift in the testing domain, as shown in the common label shift setting~\citep{li2023subspace,wu2021online,koh2021wilds}.  

To leverage bias in more general and realistic settings, we adopt a generative model that factorizes representations into invariant content and contextual bias in an identifiable manner. Building on the identified bias features, we uncover two pathways through which bias can contribute to prediction. Firstly, we use the bias features to estimate environmental states and design an environment routing mechanism that leverages these states to gate a mixture of predictors, each conditioned on a specific environment. This environment estimation enables the model to exploit bias as a useful signal, particularly in cases where the invariant content alone is insufficient for accurate prediction. Secondly, we extend the bias correction in SFB~\citep{eastwood2024spuriosity} with an adaptive label prior.  We compose predictors using bias and content together by Bayes' Rule into two parts: bias predictor and invariant predictor which consists of content predictor and label prior. In addition to correcting the bias predictor using content predictions as pseudo-labels with a fixed label prior, we further learn the label prior in the training stage. This modification can effectively address the problem when both label shift and covariate shift co-exist. 
The contributions of this paper are summarized as follows:
\begin{itemize}
\setlength\itemsep{1pt}
\setlength\topsep{1pt}
\item We revisit the role of bias in OOD generalization, highlighting circumstances in which bias offers a constructive signal rather than a nuisance to be eliminated.
\item We provide a theoretical analysis that specifies identification and utility conditions for data bias, including cases with concurrent shifts in input and label distributions.
\item We introduce a general bias-aware framework that makes predictions with an environment routing
mechanism and conducts adaptive bias correction.
\item We evaluate the proposed method on synthetic and real-world datasets and show that it consistently outperforms baseline approaches across both settings.

\end{itemize}

\section{Theoretical Analysis}

\label{theory}
\subsection{Problem Setup and Data Generation Process}

\textbf{Notation and problem setup.} We formulate the task as a multi-source domain generalization problem, aiming to learn a predictor that remains robust in an unseen target domain using only data from multiple source domains. Let $ \rvx_k $ denote a high-dimensional image observation $ \rvx_k:= [ x_{1}, \cdots, x_{n_x} ] \in \gX \subset \R^{n_{x}} $, and $ y_k$ denote the corresponding label. With access to $M$ source domains  $\{\mathcal{S}_1,\mathcal{S}_2, ..., \mathcal{S}_M\}$, with each domain represented as $(\rvx^{\mathcal{S}_i},\rvy^{\mathcal{S}_i})=(\rvx^{\mathcal{S}_i}_k,y^{\mathcal{S}_i}_k)^{m_i}_{k=1}$, our goal is to learn a predictor $f(\rvx)$ that can generalize well to the new domain $\mathcal{T}$ where $ \mathcal{T} \notin \{\mathcal{S}_1,\mathcal{S}_2, ..., \mathcal{S}_M\}$. 

\begin{figure}
    \centering
    
    \includegraphics[width=0.5 \linewidth]{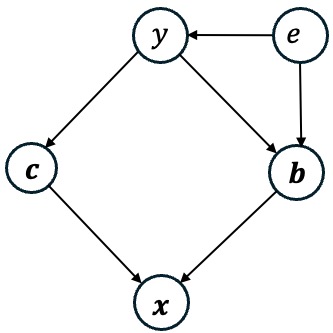}
    \caption{\textbf{A graphical representation of the data generation process.} The content variables $\rvc$ are invariant to environment changes once label $y$ is given, while the bias variables $\rvb$ vary across environments. Observed data $\rvx$ is generated by $g(\rvc, \rvb)$, with $\rvc$ and $\rvb$ forming the latent variable $\rvz$. The environment $e$ affects $\rvb$ but not $\rvc$, and the target $y$ may also depend on $e$, reflecting complex data-environment interactions.}
    \label{fig:data_gene}
    \vspace{-0.5 cm}
\end{figure}
\textbf{Data generation process.} To illustrate the origin and impact of bias, we model the data generation process using a latent variable model. Let $\rvc:= [ c_{1}, \cdots, c_{n_{c}} ] \in \mathcal{C} \subset \R^{n_{c}} $ denote content variables and $\rvb:= [ b_{1}, \cdots, b_{n_{b}} ] \in \mathcal{B} \subset \R^{n_{b}} $ denote the bias. 
As shown in Figure~\ref{fig:data_gene}, the generation process of the observation $\rvx$ is defined as $ \rvx := g(\rvc,\rvb) $, where $\rvz := [\rvc, \rvb] $ with $n_z = n_{c} + n_{b}$ dimensions denotes the latent variables and $g: [\rvc, \rvb] \mapsto \rvx$ denotes the generating function. To distinguish $\rvc$ and $\rvb$, we introduce an environment variable $e$, where the content $\rvc$ remains invariant despite changes in the environment once $y$ is given, whereas the bias $\rvb$ varies as the environment changes. Taking the ``cow vs. camel classification'' problem for illustration, $\rvc$ denotes the characteristics of the object, like the color/shape of the animal, while $\rvb$ represents the background information.

% \begin{wrapfigure}{r}{0.6\linewidth}
%     \centering
%     \vspace{-0.3cm}
%     \includegraphics[width=\linewidth]{figures/relationship.jpg}
%     \vspace{-0.4cm}
%     \caption{\textbf{A graphical representation of the data generation process.} The content variables $\rvc$ are invariant to environment changes once label $y$ is given, while the bias variables $\rvb$ vary across environments. Observed data $\rvx$ is generated by $g(\rvc, \rvb)$, with $\rvc$ and $\rvb$ forming the latent variable $\rvz$. The environment $e$ affects $\rvb$ but not $\rvc$, and the target $y$ may also depend on $e$, reflecting complex data-environment interactions.}
%     \label{fig:data_gene}
%     \vspace{-0.4 cm}
% \end{wrapfigure}
Note that we consider a more general case where $y$ may also be influenced by the environment, allowing for a more complex relationship modeling between the data and the environment. This assumption is reasonable, as the probability of encountering a cow versus a camel is strongly influenced by the environment (e.g., grasslands vs. deserts) in real-world scenarios.
Here we highlight two causal paths between bias and label target,  $ y \leftarrow e \rightarrow \rvb $ and $y \rightarrow \rvb $, which motivate two ways to leverage the bias for prediction. First, we can estimate the environment state from $\rvb$ and use such information to adjust the prediction. Second, we can leverage content to guide the direct prediction from $\rvb$ to $y$.

% First, we can capture the positive information from $\rvb$ to directly contribute to the inference of $y$. Second, we can estimate the environment state from $\rvb$ and use such information to adjust the final prediction. 

\subsection{Theoretical Analysis For Identifiability and Usage}

% \subsubsection{Bias identification.}
% \paragraph{Bias identification.}\label{identifiability}

\paragraph{Bias and Content identification.}We demonstrate that content and bias variables can be effectively disentangled with an identifiability guarantee. We extend the theoretical results of the previous causal representation learning methods to align with our formulation, with the detailed modification, proof and discussions that can be found in~\cref{sec:detailed_sub,sec:dis of lemma}. Such block-wise identification enforces invertible mappings between learned and ground-truth factors: $\hat{\mathbf{c}}=f(\rvc)$, $\hat{\mathbf{b}}=g(\rvb)$, with $f,g$ being invertible.
It is important to note that this identifiability result has already been established in previous work and is not claimed as a contribution of this paper. However, despite the identification results,  existing methods still rely on invariant prediction (including invariant content variable and invariant relation between bias variable and label). Unlikely, building on this identifiability framework, we go further by exploring how to effectively utilize biased data to enhance OOD performance.

\begin{lemma}
\label{lemma: iden}
(\textbf{Block-wise Identification of $\rvc$ and $\rvb$} \citep{kong2022partial}). Assuming that the data generation process follows Figure~\ref{fig:data_gene}, and the following assumptions hold true:

\begin{itemize}
    \item A1 (\emph{Smooth and positive density}): $p_{\mathbf{z}\mid\mathbf{e},\mathbf{y}}(\mathbf{z}\mid\mathbf{e},\mathbf{y})$ is smooth with strictly positive density on its support.
    \item A2 (\emph{Conditional independence and content invariance}): 
    Given $(\mathbf{e},\mathbf{y})$, the coordinates of $\mathbf{z}$ are independent,
    \(
        \log p_{\mathbf{z}\mid\mathbf{e},\mathbf{y}}(\mathbf{z}\mid\mathbf{e},\mathbf{y})
        = \sum_{i=1}^{n_z} q_i(z_i;\mathbf{e},\mathbf{y}),
    \)
    \item A3 (\emph{Linear independence}): For any $\mathbf{b}\in\mathcal{B}\subseteq\mathbb{R}^{n_b}$ (and the relevant $\mathbf{y}$), there exist $n_b+1$ environments $e_0,e_1,\ldots,e_{n_b}$ such that the $n_b$ vectors
    \(
        \bm{w}(\mathbf{b},e_j,\mathbf{y})-\bm{w}(\mathbf{b},e_0,\mathbf{y}),\quad j=1,\ldots,n_b,
    \)
    are linearly independent, where
    \(
        \bm{w}(\mathbf{b},e_i,\mathbf{y})
        := \Big(\tfrac{\partial q_1(b_1;e_i,\mathbf{y})}{\partial b_1},\ldots,\tfrac{\partial q_{n_b}(b_{n_b};e_i,\mathbf{y})}{\partial b_{n_b}}\Big)^\top.
    \)
    \item A4 (\emph{Domain variability}): There exist $e_i\neq e_j$ such that, for the relevant $\mathbf{y}$ and for any measurable set $A_{\mathbf{z}}\subseteq\mathcal{Z}$ with non-zero probability that cannot be written as $\Omega_{\mathbf{c}}\times\mathcal{B}$ for any $\Omega_{\mathbf{c}}\subset\mathcal{C}$,
    \(
        \int_{A_{\mathbf{z}}} p(\mathbf{z}\mid \mathbf{e}_i,\mathbf{y})\,d\mathbf{z} \;\neq\;
        \int_{A_{\mathbf{z}}} p(\mathbf{z}\mid \mathbf{e}_j,\mathbf{y})\,d\mathbf{z}.
    \)
\end{itemize}

Then the learned $\hat{\mathbf{c}}$ and $\hat{\mathbf{b}}$ are block-wise identifiable.
\end{lemma}

% The detailed proof can be found in~\cref{sec:detailed_sub} . Such block-wise identification ensures that the learned content and bias representations, $\hat{\mathbf{c}}$ and $\hat{\mathbf{b}}$, are functions of the ground truth content $\rvc$ and bias $\rvb$, respectively. 

% \textbf{Use bias for inference}.
% \label{use_bias} 

% \begin{assumption*}
% \textbf{Assumption A5} (\textit{Unblocked Influence}): There exists a causal path from the bias variable $\rvb$ to the target variable $y$ that is not blocked by the content variable $\rvc$. Formally, $\rvb \not\!\perp\!\!\!\perp y \mid \rvc$.
% \end{assumption*}

\begin{definition}(\textit{\emph{Unblocked Influence}}):
Let $\vz_1 \in \mathbb{R}^{d_1}$, $\vz_2 \in \mathbb{R}^{d_2}$, and $\vz_3 \in \mathbb{R}^{d_3}$ be random vectors.  
We say that unblocked influence for $\vz_1$, $\vz_2$ given $\vz_3$ holds if there exists a causal path from $\vz_1$ to $\vz_3$ that is not blocked by $\vz_2$.  
Formally, this condition requires
\(
\vz_1 \not\!\perp\!\!\!\perp \vz_3 \;\mid\; \vz_2.
\)
\end{definition}

% \textbf{Assumption A5} (\textit{Unblocked Influence}): There exists a causal path from the bias variable $\rvb$ to the target variable $y$ that is not blocked by the content variable $\rvc$. Formally, $\rvb \not\!\perp\!\!\!\perp y \mid \rvc$.

\begin{lemma}
    The bias $\rvb$ can be effectively utilized for prediction if Assumptions A1 through A4 hold and there is unblocked influence for  $\rvb$,$y$ given $\rvc$.
    \label{lemma: util}
\end{lemma}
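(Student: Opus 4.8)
The plan is to read ``effectively utilized for prediction'' in information-theoretic terms and then show that block-wise identifiability transports the predictive content of the true bias into the learned representation without loss. I would formalize usefulness as a strict reduction of the Bayes risk under log-loss: the optimal content-only predictor and the optimal content-plus-bias predictor attain risks equal to the conditional entropies $H(y \mid \rvc)$ and $H(y \mid \rvc, \rvb)$, so the gain from using bias is exactly the conditional mutual information $I(y; \rvb \mid \rvc) = H(y \mid \rvc) - H(y \mid \rvc, \rvb)$. It therefore suffices to show (i) this quantity is strictly positive for the true latents, and (ii) it is preserved when $(\rvc, \rvb)$ is replaced by the learned $(\hat{\rvc}, \hat{\rvb})$.

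For step (i), Assumption A5 asserts precisely $\rvb \nindep y \mid \rvc$, which is by definition equivalent to $I(y; \rvb \mid \rvc) > 0$. For step (ii), I would invoke Lemma~\ref{lemma: iden}: under A1--A4 the learned variables are block-wise identifiable, so $\hat{\rvc} = h_c(\rvc)$ and $\hat{\rvb} = h_b(\rvb)$ for invertible maps acting separately on the content and bias blocks. Because conditional mutual information is invariant under invertible reparametrizations of both the argument and the conditioning variable, $I(y; \hat{\rvb} \mid \hat{\rvc}) = I(y; \rvb \mid \rvc)$. Chaining the two steps yields $H(y \mid \hat{\rvc}) - H(y \mid \hat{\rvc}, \hat{\rvb}) = I(y; \hat{\rvb} \mid \hat{\rvc}) > 0$, i.e., a predictor with access to the identified bias strictly dominates one restricted to the identified content, which is the desired conclusion.

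The main obstacle I anticipate is making the information-preservation in step (ii) fully rigorous: Lemma~\ref{lemma: iden} on its own only tells us that $\hat{\rvb}$ is a \emph{function} of $\rvb$, and a non-invertible function could in principle collapse values of $\rvb$ that differ in their bearing on $y$ (data processing would then give only $I(y; \hat{\rvb} \mid \hat{\rvc}) \le I(y; \rvb \mid \rvc)$). I would therefore need to lean on the full block-wise identification guarantee to certify that $h_b$ is a diffeomorphism and hence loses no predictive information.

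A second, more conceptual difficulty is that the strict inequality lives on the distribution where A5 is posited, whereas the ultimate aim is an unseen target domain. To handle this I would separate the two causal routes highlighted in the data-generation process: along the direct path $y \rightarrow \rvb$ the label-to-bias mechanism is stable, so $I(y; \rvb \mid \rvc) > 0$ transfers to the target and the bias can be read off directly; along the indirect path $y \leftarrow \rve \rightarrow \rvb$ the signal is mediated by the environment, so I would argue that after recovering (or approximating) $\rve$ from $\rvb$, the environment-conditioned relation can be re-weighted toward the target. This split mirrors the paper's direct and indirect strategies and is where the argument requires the most care.
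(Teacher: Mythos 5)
Your proposal takes essentially the same route as the paper's own proof: Appendix B.2 establishes the lemma by identifying the log-loss Bayes-risk gap with the conditional mutual information $I(y;\rvb\mid\rvc)=H(y\mid\rvc)-H(y\mid\rvc,\rvb)$ and using A5 to make it strictly positive (the paper additionally runs a parallel squared-error argument via the law of total variance). Your extra care about whether block-wise identifiability actually preserves the predictive information in $\hat{\rvb}$ is a legitimate point that the paper glosses over with the phrase ``we can extract $\rvc,\rvb$ containing all the information as the ground-truth variables,'' but it does not change the substance of the argument.
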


\paragraph{When can bias be leveraged?}After identifying bias, we further provide theoretical insights into the conditions under which it can be leveraged to enhance predictive performance. Specifically, we show that the identified bias can contribute to prediction as long as there exists an active (i.e., unblocked) causal path from the bias to the prediction target, meaning the good information carried by the bias is not fully mediated or blocked by the content. As shown in~\cref{fig:data_gene}, there exists a path $y \rightarrow \rvb$ which indicates that $\rvb$ is not blocked with $y$ by $\rvc$. And it should be noted that even without this path, since \( \mathbf{b} \leftarrow e \rightarrow y \), $\rvb$ is not blocked with $y$ by $\rvc$.

As shown in Lemma~\ref{lemma: iden}, under Assumptions A1–A4, both content and bias information $\rvc,\rvb$ can be well identified. It indicates that we can extract $\rvc,\rvb$ containing all the information as the ground-truth variables. 
Unblocked influence states that
\[P(y \mid \rvc, \rvb)\;\neq\; p(y \mid \rvc)\quad\text{on a set of nonzero measure.}
\]
In Bayesian decision theory (for any proper loss, e.g.\ log‐loss or squared error), the Bayes‐optimal predictor using \((\rvc,\rvb)\) is
\(
f^*(\rvc,\rvb)
=\arg\min_{f}\;\mathbb{E}\bigl[\ell\bigl(y,f(\rvc,\rvb)\bigr)\bigr],
\)
which depends on the full conditional \(p(y\mid\rvc,\rvb)\).  
The best predictor using only \(\rvc\) is
\(
\arg\min_{f}\;\mathbb{E}\bigl[\ell(y,f(\rvc))\bigr],
\)
which depends only on \(p(y\mid\rvc)\). Since unblocked influence ensures that the conditional distributions \(p(y \mid \rvc)\) and \(p(y \mid \rvc, \rvb)\) differ, the joint predictor \(f^*(\rvc, \rvb)\) achieves a strictly lower expected loss compared to any predictor that relies solely on \(\rvc\). This confirms that the bias variable \(\rvb\) can indeed be exploited to improve predictive accuracy. A detailed proof, demonstrating that incorporating \(\rvb\) leads to a better predictor under log-loss or squared error, is provided in~\cref{better-add_bias}.

% In this section, we build on our theoretical findings in~\cref{theory} to propose the Bias-Aware Generalization (\textbf{BAG}, in short) model which uses data bias for generalization. The whole pipeline can be found in~\cref{binary_bag}. The multi-class version can be found in~\cref{multi-bag}.

\section{Preliminary Work on Bias Correction}

Recently, Stable Feature Boosting (SFB) \citep{eastwood2024spuriosity} proposed a bias correction method that explores how bias can be leveraged for prediction. The core intuition is that predictions derived from invariant features are reliable across environments, and thus can serve as a reference signal for correcting the predictions from biased features. Concretely, SFB treats the predictions from invariant features as pseudo-labels and uses them to adjust the bias predictor, thereby reducing spurious correlations while still exploiting useful signals embedded in bias.

\paragraph{Reliable invariant prediction assumption.} This assumption states that $p(y=1 \mid \rvc)$ remains stable across all domains, which indicates that $y \indep e \mid \rvc$.

% For representation $\rvc$, if $p(y=1 \mid \rvc)$ is reliable across all domains. It means for any two domains $e_i$ and $e_j$ where $i \neq j$, $p(y=1 \mid \rvc, e_i) = p(y=1 \mid \rvc, e_j)$. It also means that $y \indep e \mid c$.

\paragraph{Decomposition of Marginal Probability.} Specifically, according  to  Theorem 4.4 in \citep{eastwood2024spuriosity}, consider $\rvb \indep \rvc \mid y$, the prediction process can be decomposed as follows:
\[
\begin{aligned}
& \ p (y=1\mid \mathbf{c},\mathbf{b})
= \sigma\biggl(\logit \bigl(p (y=1\mid \mathbf{b})\bigr)  \\& +\logit \bigl(p(y=1 \mid \mathbf{c})\bigr)
-\logit \bigl(p(y=1)\bigr)\biggr)
\end{aligned}
\label{eq:my_equation_1}
\]
where $\sigma$ is the sigmoid function. By the reliable invariant prediction assumption, the predicted outputs using invariant features can serve as the pseudo labels in the correction process, and the label prior $p(y=1)$ can be regarded as a constant in all domains.

\paragraph{Correction with Pseudo Labels} Consider the predictor \(f_{c}:\rvc \rightarrow y\). Define pseudo label prediction as \(\hat{y} = f_{c}(\rvc)\). Let
\(
h_0 \;=\; p(\hat{y} = 0 \mid y=0),
\quad
h_1 \;=\; p(\hat{y} = 1 \mid y=1).
\)
In the case that $\rvc$ contains the information for predicting $y$, it directly follows \(
  h_0 + h_1 \;>\; 1.
\)
Intuitively, if \(c\) helps distinguish between \(y=0\) and \(y=1\), then a predictor based on \(\rvc\) should be corrected more often than chance.
% \begin{equation}
% \label{eq:h_0+_h_1}
%   ,
% \end{equation}
Then, the correction of the bias predictor can be formulated as follows:
\begin{equation}
\label{eq:tta}
     p\bigl(y=1 \mid \rvb\bigr) 
    \;=\; \frac{\Pr \bigl(\hat{y}=1 \mid \rvb\bigr) \;+\; h_0 \;-\; 1}{\,h_0 + h_1 \;-\; 1\,}.
\end{equation}
Since $\hat{y}$ is a function of $\rvc$, this correction relies on the assumption that $\rvc$ depends only on $y$, and the quantities $h_0$ and $h_1$ remain constant across all domains. Due to space constraints, we present only the core idea and method here, and refer readers to \citep{eastwood2024spuriosity} and Appendix~B for detailed derivations.

\paragraph{Challenges.} The method relies on the assumption of reliable invariant prediction, i.e., $y \indep e \mid \rvc$. However, as illustrated in Figure~\ref{fig:data_gene}, this assumption no longer holds under label shift. In such cases, the prediction $p(y=1 \mid \rvb)$ is also affected by the confounding influence of $\rve$. Ignoring $\rve$ can therefore lead to suboptimal predictors.

% current methods completely overlook the presence of label shift, under which $p(y)$ may vary across domains, rendering them ineffective. In addition, the effectiveness of using pseudo-labels to guide bias remains theoretically unclear.
\label{4_method}

\section{Method}
\begin{figure*}[t]
    \centering
    \includegraphics[width=0.9 \linewidth]{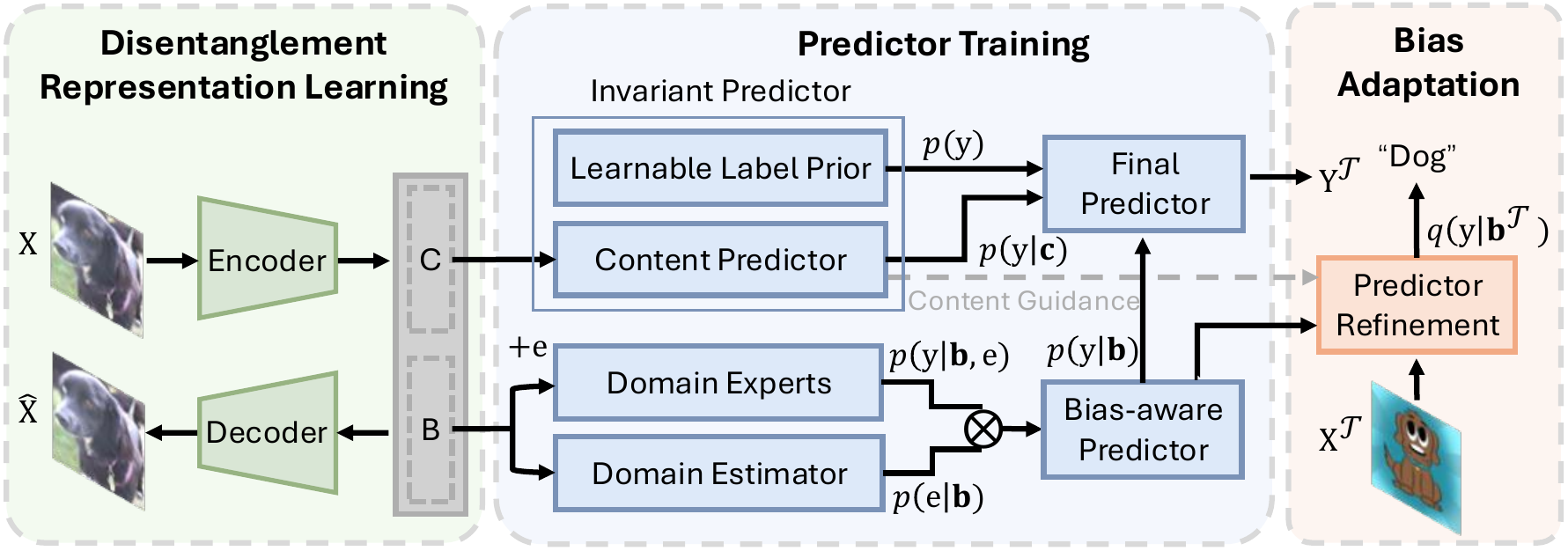}

    \vspace{-0cm}
\caption{\textbf{Overall framework of the BAG method.} The framework consists of three main modules: representation learning, predictor training, and adaptation. In the representation learning stage, we employ a VAE to disentangle content and bias variables. For prediction, we construct a content predictor, a label prior, and a bias-aware predictor that reweights multiple domain experts using a domain estimator and also retrain under labels from content predictor in the test stage. Three predictor components work together for the final prediction. }
    \label{fig:main}
    \vspace{-0.3cm}
\end{figure*}
% Our theoretical analysis established that bias may contain predictive information rather than being purely detrimental. 
In this section, we introduce the Bias-Aware Generalization (\textbf{BAG}) framework, which systematically incorporates bias into the prediction process to improve OOD performance. The whole pipeline can be found in \cref{fig:main}. We start by learning disentangled representations so that $\rvb$ and $\rvc$ align with the data-generating process, producing factorized bias and content. Accordingly, we propose (i) Bias-Aware Prediction with an Environment Routing Mechanism and (ii) Bias Correction with an Adaptive Label Prior. For each component, we provide a theoretical analysis with a binary classification case and corresponding method details, while the multi-class extension and the full algorithm are deferred to \cref{section:multi-class,binary_bag,multi-bag}.

% For each component, we provide theoretical analysis and implementation details; the complete algorithmic procedure appears in~\cref{binary_bag}. The multi-class version can be found in~\cref{multi-bag}. To illustrate the core insights of our approach, we take a binary classification setting as a showcase in the following, and put the general multi-class scenario in~\cref{section:multi-class}. 

% For each component, we provide theoretical analysis and implementation details, with the full algorithm in~\cref{binary_bag}. We use the binary classification case to illustrate the core ideas, while the multi-class extension is deferred to~\cref{section:multi-class}.

% Building on this foundation, we present a probability decomposition valid under label shift, and examine the invariant component that remains stable across domains. We then highlight two causal paths between bias and the target label: $y \rightarrow \rvb$ and $y \leftarrow e \rightarrow \rvb$—which motivate two complementary ways to leverage bias for prediction. 

% Accordingly, we propose (i)Bias Correction with an Adaptive Label Prior and (ii) Bias-Aware Prediction with an Environment Routing Mechanism. For each component, we provide theoretical analysis and implementation details; the complete algorithmic procedure appears in~\cref{binary_bag}. The multi-class version can be found in~\cref{multi-bag}. 

\subsection{Representation Learning and Disentanglement}

\paragraph{Generative modeling.}
Following Lemma \ref{lemma: iden}, we take the observed data $\rvx$ from the source domains as input to the encoder, which maps it into the latent variable $\rvz$. We then decode $\rvz$ to reconstruct $\hat{\rvx}$ and train the encoder–decoder using the standard VAE loss:
\begin{equation}
\label{eq:vae_loss}
\mathcal{L}_{\text{vae}} \;=\;
\mathbb{E}_{\rvx} \bigl[
   \|\rvx - \hat{\rvx}\|^2 \;+\;
    \beta \mathrm{KL}\bigl(q_{\phi}(\mathbf{z} \mid \rvx) \,\|\, p(\mathbf{z})\bigr)
\bigr],
\end{equation}
where $\beta$ is a hyper-parameter to balance the reconstruction loss and the KL divergence, which is set to 1 in our approach. The prior distribution 
$p(\rvz)$ is chosen as an isotropic Gaussian. Minimizing~\cref{eq:vae_loss} encourages $(\mathbf{c}, \mathbf{b})$ to be identifiable in a block-wise sense and captures both invariant and domain-specific factors for further modeling.

\paragraph{Regularization.} Following~\citep{jiang2022invariant}, we leverage a necessary but not sufficient condition for $ \mathbf{c} \indep \mathbf{b} \mid y$, that is \( \mathbb{E}\bigl[\mathbf{c} \cdot (\mathbf{b} - \mathbb{E}[\mathbf{b} \mid y])\bigr] = 0 \). Specifically, we add a regularization term to constrain the conditional independence, which can be written as:
\[ \mathcal{L}_{ind} = \frac{1}{n}\sum_{i=1}^n
\mathbf{c}_{i} \cdot \bigr( \mathbf{b}_{i}- \frac{1}{\bigl|\{j : y_j=y_i\}\bigr|} \sum_{j : y_j=y_i} \mathbf{b}_{j} \bigr). \]
After learning $\rvb$ and $\rvc$, our goal is to predict $y$ in the target domain via $p(y \mid \rvb, \rvc)$.

\subsection{Bias-aware Prediction with Environment Routing Mechanism}

\paragraph{Bias-aware predictors.} Only relying on content to guide the bias predictor has practical limitations: when $\rvc$ carries little information about $y$, the pseudo-labels $\hat y$ may be nearly random, making such guidance ineffective. Motivated by the causal path $y \leftarrow e \rightarrow \rvb$, we instead propose further leveraging bias to infer the environment $e$, and then using $e$ to route predictions. This improves $p(y \mid \rvb)$ by explicitly modeling environmental context, independent of $\rvc$, and yields more robust bias utilization. The objective can be formulated as the following optimization task:

\begin{equation}
q^*(y \!\mid\! \rvb) 
= \underset{q(\cdot \mid \rvb)}{\arg\min} 
{\textstyle \mathbb{E}_{e \sim p(e \mid \rvb)}
\bigl[ D_{\mathrm{KL}}( p(y \!\mid\! \rvb, e) \,\|\, q(y \!\mid\! \rvb) ) \bigr] }.
\end{equation}
This suggests identifying a predictor that minimizes the expected Kullback-Leibler (KL) divergence from $p(y \mid \rvb, e)$ across all environments. 
The Bayes-optimal solution~\citep{ohn2023optimal} is 
\[
q^*(y \mid \rvb)
=
\sum_{e} p(e \mid \rvb)\,p(y \mid \rvb,e)
\]
This implies that we can leverage the identified bias to infer the environmental context and then develop a suitable bias-aware predictor $p(y \mid \rvb)$ with a set of domain experts $p(y \mid \rvb, e)$ to bridge the gaps between environments. The detailed derivation can be found in~\cref{section:KL}.

\paragraph{Environment routing mechanism.}
To exploit the bias \(\mathbf{b}\) in an indirect manner, we introduce a set of \(M\) learnable domain embeddings \(\{e_1, e_2, \ldots, e_M\}\), which correspond to  
$M$ source domains  $\{\mathcal{S}_1,\mathcal{S}_2, ..., \mathcal{S}_M\}$. 
Each embedding \(e_k\) captures a distinct ``mode'' or style that may emerge in the source training data. Intuitively, this design follows the mixture of experts principle, allowing each expert to specialize in a subset of the training distribution. In terms of model design, each \(e_k\) is a learnable vector (or a small neural component) trained to represent a particular domain bias. We implement a domain classifier $p(e=e_i \mid \rvb)$ via a softmax layer over linear outputs to produce scalar weights.

These weights indicate how well each expert \(e_i\) aligns with the domain-specific bias \(\mathbf{b}\).
For each expert, we parametrize \( p \bigl(y \mid \mathbf{b}, e = e_i\bigr)\) as an MLP network, whose input is the combination of $\rvb$ and $e_i$. It models the expert specializing in domain $e_i$, making specific predictions based on the latent representation \( \rvb \).
Formally, we calculate the bias-aware predictor \(f_b\) (i.e., \(p(y \mid \rvb)\)) by reweighting domain-specific predictors:
\begin{equation}
\label{eq:bias_prediction}
f_b(\mathbf{b}) \;=\; \sum_{i=1}^M p \bigl(y \mid \mathbf{b}, e=e_i\bigr)\,p(e=e_i \mid \rvb).
\end{equation}

\subsection{Bias Correction with Adaptive Label Prior}

Existing bias correction methods, such as SFB~\citep{eastwood2024spuriosity}, rely on content predictions as pseudo-labels to adjust the bias predictor under a fixed label prior. While effective in certain settings, this assumption breaks down under label shift, where the label distribution varies across environments. In such cases, a fixed prior leads to systematic miscalibration and limits the correction’s effectiveness. To address this issue, we introduce an adaptive label prior that dynamically accounts for shifts in label distribution and identifies a new invariant component under label shift, thereby enabling more accurate and robust bias correction across domains.
Since $p(y \mid \rvc)$ is not invariant under label shift, we introduce an alternative decomposition to retain the invariant components while leveraging bias.

\begin{theorem}[Decomposition under label shift]
\label{theorem:Decomposition of Marginal Probability}
Given the data generation process illustrated in \cref{fig:data_gene}, we have
\begin{equation}
\begin{aligned}
 &p (y=1\mid \mathbf{c},\mathbf{b})
= \sigma\Biggl(
\underbrace{\logit \bigl(p (y=1\mid \mathbf{b})\bigr)}_{\text{Bias Predictor}} \\&
+
\underbrace{\log \frac{p(y=1 \mid \mathbf{c}) / p(y=1)}
{p(y=0 \mid \mathbf{c}) / p(y=0)}}_{\text{Invariant Predictor}}
\Biggr)   
\end{aligned}
\end{equation}
Due to the label shift, the content prediction $p(y \mid \rvc)$ is not invariant, and the label prior $p(y)$ is not constant. Interestingly, we found that the composition of both is invariant.  
\end{theorem}

\paragraph{Discussion on invariant prediction.} Although \(p(y \mid \mathbf{c})\) is not invariant, we observe that 
\(p(\mathbf{c} \mid y)\) remains invariant across domains, since $y$ d-separates $\rvc$ from the environment variable $e$. Formally, \(\forall\, e_i, e_j \in E_{all}\), since \(\mathbf{c}\) only depends on \(y\), it follows that
\[
\begin{aligned}
    & p(\mathbf{c} \mid y=1,\, e=e_i) \;=\; p(\mathbf{c} \mid y=1,\, e=e_j) \Rightarrow \\& \frac{p(y=1 ,\, e=e_i\mid \mathbf{c})}{p(y=1,e=e_i)} 
    \;=\;
    \frac{p(y=1 ,\, e=e_j \mid \mathbf{c})}{p(y=1,e=e_j)}
\end{aligned}
\]
which implies \(\tfrac{p(y \mid \mathbf{c})}{p(y)}\) is an invariant quantity across all domains. It motivates us to jointly learn the content predictor and the learnable label prior. The detailed proof can be found in \cref{decomposition}.

\paragraph{Predictors} 
Guided by Theorem~\ref{theorem:Decomposition of Marginal Probability}, we implement the label predictor into two components:
\[
f(\rvz) \;=\; f_b(\rvb)\;+f_{inv}(\rvc,Pr), \label{eq:pred_all}
\]
where \(f_b\) denotes the bias-aware predictor \(p(y \mid \rvb)\), 
and \(f_{inv}\) is the invariant predictor built on the content variable \(\rvc\) 
and the learnable prior term \(Pr = \logit\!\big(p(y)\big)\).
In detail, \(
f_{in}(\rvc,Pr) = f_c(\rvc)-Pr
\) where $f_c(\rvc)=p(y\mid\rvc)$.
In particular, $f_{inv}$ is an ``invariant'' module across all domains and can be used directly in the target domain without further modification, even under label shift. 
In training, we jointly learn bias predictors, the content predictor, and the logits prior with a cross-entropy classification loss \(\mathcal{L}_{\text{cls}}\).

% \subsection{Bias correction with adaptive label prior}

\begin{theorem}[Upper bound of Bias Correction]
\label{thm:perf-corr}
Under the data generation process in \cref{fig:data_gene},
define the corrected bias predictor
\(
\widehat f(\rvb)\;=\;\frac{\widehat q(\rvb)+ \hat h_0-1}{\,\hat h_0+\hat h_1-1\,},
\)
where $\widehat q(\rvb)$ is the estimator of $q(\rvb)=p(\hat y=1\mid \rvb)$,
$\hat h_0,\hat h_1$ are estimators of $h_0$ and $h_1$.
Let $f^\star(\rvb)=p(y=1\mid \rvb)$ mean using only the $\rvb$ learned on sources to predict $y$ at the test stage.
For the target cross-entropy risk on the bias subproblem, define:
\(
\mathcal R_{(b)}(g)
=\mathbb{E}_{\rvb\sim p(\rvb)}
\Big(
\mathrm{KL}\big(\mathrm{Bern}(p(y=1\mid \rvb))\,\|\,\mathrm{Bern}(g(\rvb))\big)
\Big).
\) Then there exist constants $c_1,c_2>0$ (depending only on the margin assumption) such that:
\begin{equation}
\label{eq:clean-risk-bound}
\begin{aligned}
&\mathcal R_{(b)}(\widehat f)
\;\le\;
\frac{c_1}{(h_0+h_1-1)^2}\,
\mathbb{E}\!\big[(\widehat q(\rvb)-q(\rvb))^2\big]
\;+\;\\&
\frac{c_2}{(h_0+h_1-1)^2}\,
\Big((\hat h_0-h_0)^2+(\hat h_1-h_1)^2\Big)  
\end{aligned}
\end{equation}
In particular, if this upper bound is smaller than 
\(\mathcal{R}_{(b)}(f^\star)\) (equivalently, \(\Delta_{\mathrm{DG}}\)), 
then \(\widehat{f}\) attains strictly lower risk on the target domain than \(f^\star\).
\end{theorem}

\cref{thm:perf-corr} shows that while correction incurs small statistical costs from estimating \(q(\rvb)\) and ($h_0$,$h_1$), these diminish with data. So under the data generation process in Figure~\ref{fig:data_gene}, the corrected predictor is provably more reliable than source-only deployment (see \cref{prof_effect} for details).

\paragraph{Bias adaptation.}

In the training phase, the content predictor $f_c$ and the learnable label prior $Pr$ are jointly optimized to form the invariant component, which is then frozen for the test phase.
During test time, given test-domain inputs \(\{\rvx^{\mathcal{T}}\}\), we first extract the latent representations \([\mathbf{c}^{\mathcal{T}}, \mathbf{b}^{\mathcal{T}}]\) using the pre-trained encoder. We then leverage the learned content predictor \(f_c(\mathbf{c}^{\mathcal{T}})\) to generate pseudo-labels \(\hat{y}^{\mathcal{T}}\) on unlabeled test data. 
In the source domains, we collect \(h_0\) and \(h_1\), which are entries of the confusion-matrix probabilities between pseudo-labels and ground truth. Intuitively, these parameters adjust how domain-invariant information maps the bias \(\mathbf{b}\) to valid probability estimates in the new domain. To estimate \(p(\hat{y}=1 \mid \mathbf{b})\), we update the bias-aware predictor $f_b$ with the generated pseudo-labels \(\hat{y}^{\mathcal{T}}\) as
\begin{equation}
\min_{f_b} 
\; \mathbb{E}_{\mathbf{b}^{\mathcal{T}}} 
\Bigl[
   \mathcal{L}_{ada} = \ell 
   \Bigl(\sigma\bigl(f_b(\mathbf{b}^{\mathcal{T}})\bigr),\; \hat{y}\Bigr) 
\Bigr],
\end{equation}
where the adaptation loss $\mathcal{L}_{ada}$ is characterized as the cross-entropy loss with pseudo labels. 
We record this post-trained predictor as $\tilde{f}_{b}(\mathbf{b}^{\mathcal{T}})$. Let $\phi(\cdot) = \logit ((\frac{\sigma(\cdot)+h_0-1}{h_0+h_1-1}))$ represent the correction function. 
The final prediction is 
% can be written as 
$$
f([\mathbf{c}^{\mathcal{T}}, \mathbf{b}^{\mathcal{T}}])= f_c(\mathbf{c}^{\mathcal{T}})+ \phi(\tilde{f}_{b}(\mathbf{b}^{\mathcal{T}}))- Pr. \label{eq: final}
$$

\paragraph{Objectives.} Overall, the losses in both the training and bias adaptation stages are as follows:
\[
\begin{aligned}
    & \text{Training Stage: \space \space } \mathcal{L}_{all} =\mathcal{L}_{cls}+ \lambda_0  \mathcal{L}_{\text{vae}} + \lambda_1  \mathcal{L}_{ind} \\
& \text{Adaptation Stage: \space \space  } \mathcal{L}_{ada}
\end{aligned}
\]
where $\lambda_0$, $\lambda_1$ are the hyperparameters.
\section{Experiments}
\label{expreiments}
In this section, we test \textbf{BAG} on both synthetic data and real-world datasets, following the setting in the main baseline methods ACTIR~\citep{jiang2022invariant} and SFB \citep{eastwood2024spuriosity}, for a fair comparison.

\paragraph{Baseline methods.}
We compare our method with ERM and IRM ~\citep{irm}, classic approaches that leverage invariant information for making predictions. Additionally, we compare with ACTIR ~\citep{jiang2022invariant}, which disentangles the invariant feature and bias through independent regularization. We also compare with SFB ~\citep{eastwood2024spuriosity}, which aims to exploit spurious features but in a weaker setting. SFB can serve as a special case of our method when 1) the environment doesn't affect the label and 2) it is not bias-aware for prediction.  We implemented the SFB algorithm and used it for subsequent testing. We also include RDM~\citep{nguyen2024domain} and GMDG~\citep{tan2024rethinking} as more recent and advanced baselines.

% MMD~\citep{li2018domain} and  as additional baselines to demonstrate our performance. {For the DomianNet, we also add DANN~\citep{dann},
% CDANN~\citep{long2018conditional},
% SagNet~\citep{nam2021reducing},
% EQRM~\citep{eastwood2022probable},
% }.

\subsection{Simulation Experiments}

\paragraph{Data simulation.}
To examine whether \textbf{BAG} can effectively identify and leverage domain shift information for better generalization, we devised a series of simulation experiments. Our data generation process strictly adheres to the causal structure depicted in ~\cref{fig:data_gene}. First, we sample an environmental variable \( e \) following a categorical distribution, \( e \sim \mathrm{Categorical}(\{\pi_1,\dots,\pi_M\}) \), which subsequently induces variations across different domains. Then, a binary label \( y \) is generated in an environment-dependent manner:
\(
y = \mathbf{1}\{\,w^\mathsf{T} E + b_0 > 0\},
\)
where \(E\) denotes an embedding of \(e\).

Next, we assign stable content \( c \) based on the value of \( y \): if \( y=0 \), then \( c = c_0 + \epsilon \); if \( y=1 \), then \( c = c_1 + \epsilon \). Here, \(\epsilon \sim \mathcal{N}(0,\sigma_c^2 I)\). Simultaneously, the bias variable \( b \) is determined as:
\(
b = E + \mathbf{C}_{e,y},
\)
ensuring that the background shift is influenced jointly by the environment and the label. This setup emulates the common real-world scenario where false associations often arise. Finally, stable content and bias jointly generate the observed data:
\(
x = M \begin{bmatrix} c \\ b \end{bmatrix} + \eta,
\)
where \( M \) is an invertible matrix, and \(\eta \sim \mathcal{N}(0,\sigma_x^2 I)\) represents measurement noise.

\begin{figure}{} 
  \centering
  \includegraphics[width=1\linewidth]{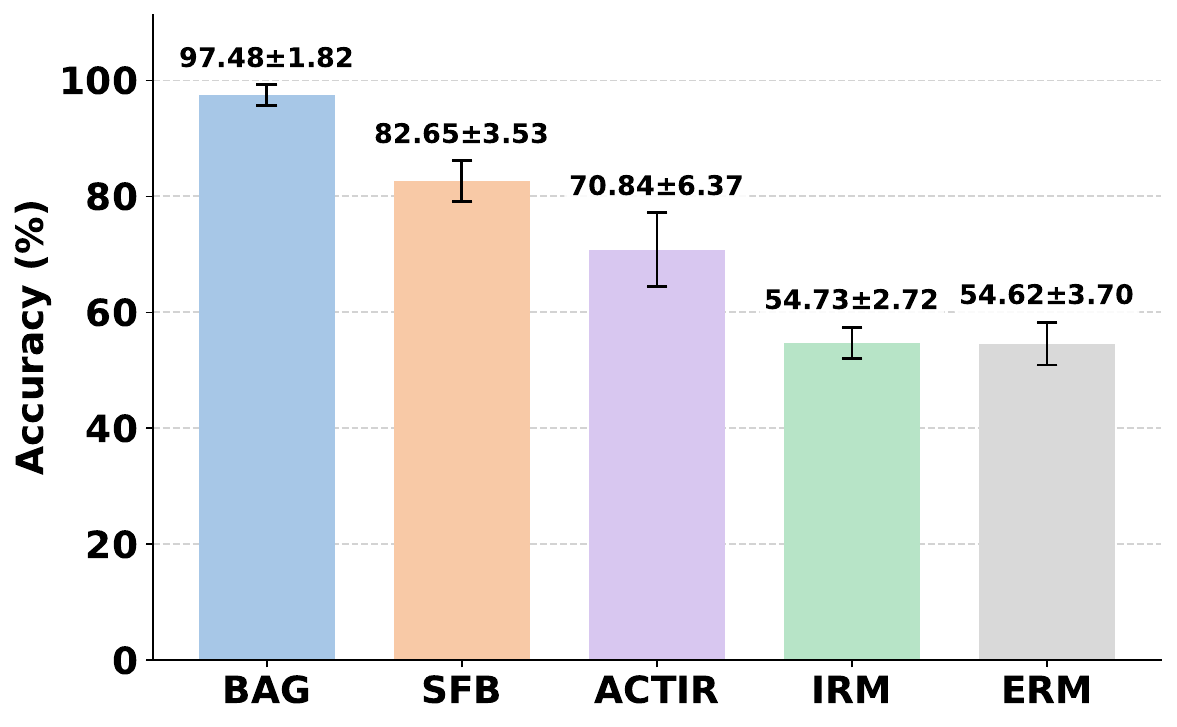} 
  \caption{ \textbf{Mean and standard deviation across different methods in binary classification simulation experiments.} }
  \vspace{-0.5cm}
  \label{fig:wrap2}
\end{figure}
\paragraph{Results and discussions.}
% in preamble:
% \usepackage{wrapfig}
% \usepackage{graphicx}

% in the main text (place it near the paragraph you want to wrap around):

This study evaluates several methods on a synthetic dataset of 5000 samples. To ensure result reliability, each method was repeated five times, and the average performance was recorded. As shown in \cref{fig:wrap2}, both ERM and IRM performed well on the training set but achieved test accuracies just above 50\%. We attribute this to the significant distributional disparity of environmental information $e$ across domains in the dataset, which limits the effectiveness of the class label in supporting accurate predictions.

In other words, an excessive focus on invariant information may hinder model optimization, leading to marginal performance gains. In contrast, our method leverages domain specific information ($\rvb$) more effectively than ACTIR or SFB, significantly enhancing prediction performance by reducing noise relative to relying solely on the class label.

\subsection{Real-world Experiments}
\paragraph{Experimental settings.} We evaluate our method on three widely used domain generalization benchmarks: PACS~\citep{li2017deeper}, Office-Home~\citep{office}, and DomainNet~\citep{peng2019moment}. For each dataset, we adopt the leave one domain out protocol, treating one domain as the target and the remaining domains as sources. We use ResNet-18 as the backbone on PACS (following SFB) and ResNet-50 on Office-Home and DomainNet (following RDM). All models are equipped with MLP-based VAEs and classifiers. We report results averaged over three random seeds.
Our framework uses a single-layer linear encoder and a two-layer linear decoder. Each domain-specific classifier is implemented as a single-layer linear head. We parameterize the environment embedding with three learnable vectors. Further implementation details are provided in~\cref{exe_details}.

\begin{table*}[tb]
\centering
\small
\vspace{0pt}
\caption{\textbf{Results on PACS  (ResNet-18) and Office-Home (ResNet-50).} BAG results are averaged over 3 seeds. ACTIR and SFB results are taken from~\citep{eastwood2024spuriosity}.}
\label{tab:pacs_office}
\begin{adjustbox}{max width=\textwidth}
\begin{tabular}{c|ccccc|ccccc}
\toprule
\multirow{2}{*}{\textbf{Algorithm}}
& \multicolumn{5}{c|}{\textbf{PACS}}
& \multicolumn{5}{c}{\textbf{Office-Home}} \\
\cmidrule(lr){2-6}\cmidrule(lr){7-11}
& P & A & C & S & Avg
& Art & Clipart & Product & Real & Avg \\
\midrule
ERM
& 93.0$\pm$0.7 & 79.3$\pm$0.5 & 74.3$\pm$0.7 & 65.4$\pm$1.5 & 78.0
& 63.1 & 51.9 & 77.2 & 78.1 & 67.6 \\

IRM~\citep{irm}
& 93.3$\pm$0.3 & 78.7$\pm$0.7 & 75.4$\pm$1.5 & 65.6$\pm$2.5 & 78.3
& 62.4 & 53.4 & 75.5 & 77.7 & 67.2 \\
ACTIR~\citep{jiang2022invariant} & 94.8$\pm$0.1  & {82.5}$\pm$0.4  & 76.6$\pm$0.6  & 62.1$\pm$1.3  & 79.0 & {67.2} & {55.5} & {78.7} & {81.1} & {70.6} \\
RDM~\citep{nguyen2024domain}
& 95.2$\pm$0.8 & 79.0$\pm$2.8 & 74.2$\pm$4.8 & \textbf{75.1}$\pm$3.8 & 80.9
&  61.1 &  55.1  & 75.7 & 77.3 & 67.3 \\

GMDG~\citep{tan2024rethinking}
& \textbf{96.6}$\pm$0.7 & 83.9$\pm$0.4 & 75.4$\pm$1.7 & 70.1$\pm$3.2 & 81.5
& \textbf{68.9} & 56.2 & 79.9 & 82.0 & 71.7 \\
SFB~\citep{eastwood2024spuriosity}
& 95.8$\pm$0.6 & 80.4$\pm$1.3 & 76.6$\pm$0.6 & 71.8$\pm$2.0 & 81.2
& 66.4 & 55.1 & 77.7 & 79.3 & 69.6 \\
\cmidrule(lr){1-11}
\textbf{BAG}
& \textbf{96.6}$\pm$0.8 & \textbf{86.0}$\pm$0.4 & \textbf{77.9}$\pm$1.0 & {73.0}$\pm$0.4 & \textbf{83.4}
& 68.8 & \textbf{57.2} & \textbf{80.0} & \textbf{82.1} & \textbf{72.0 \rule{0pt}{2.2ex}} \\
\bottomrule
\end{tabular}
\end{adjustbox}
% \vspace{-6pt}
\end{table*}

\begin{table*}[tb]
\centering
\small
\caption{\textbf{Domain-specific OOD accuracy on DomainNet.} All methods use ResNet-50 as backbones.}
\label{tab:domainnet-core}
\vspace{-0cm}

\begin{adjustbox}{width=0.9\textwidth,center} % <-- 只改这里：0.97/1.00/1.03 等
\begin{tabular}{l|cccccc|c}
\toprule
{Algorithm} & {Clipart} & {Infograph} & {Painting} & {Quickdraw} & {Real} & {Sketch} & {Avg} \\
\midrule

{ERM}  & {58.1} & {18.8} & {46.7} & {12.2} & {59.6} & {49.8} & {40.9} \\
{IRM\citep{irm}}                       & {48.5} & {15.0} & {38.3} & {10.9} & {48.2} & {42.3} & {33.9} \\
% {SagNet~\citep{nam2021reducing}}       & {57.7} & {19.0} & {45.3} & {12.7} & {58.1} & {48.8} & {40.3} \\
% {EQRM~\citep{eastwood2022probable}}    & {56.1} & {19.6} & {46.3} & {12.9} & {61.1} & {50.3} & {41.0} \\
{ACTIR~\citep{jiang2022invariant}}             & {50.0}   & {22.6}   & {43.5}   & {11.7}   & {57.8}   & {46.8}   & {38.7} \\
{RDM~\citep{nguyen2024domain}}         & {62.1}   & {20.7}   & 49.2   & \textbf{14.1}   & 63.0   & 51.4   & {43.4} \\
{GMDG~\citep{tan2024rethinking}} & {\textbf{63.4}} & {22.4} & {51.4} & {13.4} & 64.4 & {\textbf{52.4}} & 44.6 \\
{SFB~\citep{eastwood2024spuriosity}} & 60.7 & 25.3 & 49.0 & 12.0 &  64.0&  50.2 & 43.5 \\

\cmidrule(lr){1-8}
{BAG}                                  & {61.8} & {\textbf{25.6}} & {\textbf{51.8}} & {{13.9}} & {\textbf{65.5}} & {50.3} & {\textbf{44.8}} \\
\bottomrule
\end{tabular}
\end{adjustbox}

\vspace{-5pt}
\end{table*}

\paragraph{Results and discussions.} 
For PACS (left in \cref{tab:pacs_office}), our method achieves the best average accuracy, improving by $1.9\%$ over the strongest baseline, GMDG.
Compared with the most closely related method, SFB, our model consistently improves performance across all target domains.
The largest gain is observed on Art Painting, possibly because this domain contains stronger style/background shifts, where modeling bias is particularly beneficial.
On Office-Home (right in \cref{tab:pacs_office}), \textbf{BAG} outperforms all baselines on average.
In particular, it achieves improvement on three of the domains, and obtains comparable results on Arts. 
The notably larger improvement on Clipart which is generally regarded as more challenging, 
suggests that \textbf{BAG} generalizes better under larger domain shifts.
On DomainNet, \textbf{BAG} achieves the best average accuracy, surpassing GMDG and RDM.
It attains the best performance on {infograph}, {painting} and {real}, while remaining competitive on {clipart} , {quickdraw}, and {sketch}.
The largest gain appears on {infograph}, where background and stylistic artifacts are prominent, indicating that disentangling content from bias helps suppress spurious domain cues and better exploit informative bias factors. We also outperform ACTIR and SFB by a substantial margin, which we attribute to more accurate disentanglement and a more principled use of bias.

% We are not the top-performing method on \textit{clipart} and \textit{sketch}; a plausible explanation is that the domain-specific bias in these domains is weaker or less consistent, which limits the benefit of explicitly leveraging bias.

% Under the same setting, we also outperform ACTIR by a substantial margin, which we attribute to more accurate disentanglement and a more principled use of bias.

\subsection{More Ablation Studies, Sensitivity Check, and Visualization.}

Since \textbf{BAG} involves a two-stage optimization procedure and a VAE component, we conduct some ablation studies to quantify the contributions of each design choice. We remove the corresponding components, such as VAE,  test-time adaptation that uses pseudo-labels to guide the bias branch, and the reweighting-of-experts module while keeping the rest unchanged as ablation studies. Full ablation details are provided in~\cref{exe_ablation}.
To verify the stability of \textbf{BAG}, we study its sensitivity to the number of domain experts and the embedding dimensionality in \cref{sec:sen of em}. We also visualize the learned representations to illustrate the disentanglement of latent variables in \cref{sec:disen}.

% To varify the stablity of \textbf{BAG}, we check the sensitivity of number of domain experts and demensions of embedding in \cref{sec:sen of em}. We also visualization the learned features to show the disentanglements of latent variables in \cref{sec:disen}.

% we regarding the sensitivity of domain experts or embeddings and  the disentanglement of latent variables in~\cref{sec:disen,sec:sen of em}.

% All ablated variants remain competitive and outperform the baselines on both PACS and the synthetic dataset, but each underperforms the full \textbf{BAG}, confirming the importance of the removed components.

% In the ablation names, ``-'' indicates removing only the corresponding component while keeping the rest unchanged.
% Specifically, \textbf{BAG-VAE} removes the VAE loss and replaces it with a linear layer; \textbf{BAG-TTA} disables test-time adaptation that uses pseudo-labels to guide the bias branch; and \textbf{BAG-RE} removes the reweighting-of-experts module.

% Considering that \textbf{BAG} involves two steps of optimization, we demonstrate the effects of each individual optimization strategy, namely BAG-BA and BAG-TTA.
%  Additionally, we explore the learning of latent variables by designing a model variant, BAG-VAE, which is a version of the model without the VAE component.

\section{Conclusion}
\label{sct:conclusion}

In this paper, we challenge the conventional methods for OOD generalization that treat bias solely as an obstacle. Instead, we theoretically demonstrate that bias can be strategically leveraged to enhance prediction, particularly when it retains useful dependencies on the target domain labels. Our framework demonstrates that bias can be utilized in two ways: bias-aware prediction with an environment routing
mechanism and bias correction with an adaptive label prior. Through empirical validation on a synthetic task and some real-world benchmarks, we establish the effectiveness of the proposed approach. 
The results consistently show that leveraging bias in a structured manner leads to more robust and adaptable models, offering a new perspective on OOD generalization. 

\textbf{Limitations.} This paper provides a principled general framework that leverages bias for domain generalization. From a theoretical perspective, the success of such a framework relies on assumptions about the disentanglement of bias features and invariant features. Although the disentanglement is supported by both real-world and simulated experiments, it typically depends on intrinsic structural properties of the data (e.g., linear independence) that are difficult to verify in practice. We therefore recommend users to conduct independence tests between bias and invariant features or the visualizations on the feature space as a practical check metric to help assess their distinguishability in real applications.

% This paper provides a framework for how to use bias in domain generalization. In theory, methods rely on some assumptions about the disentanglement of bias features and invariant features for domain generalization. Although validated by real-world experiments and simulation experiments, such conditions about the intrinsic data structures( such as linear independence) are difficult to validate in practice. We recommend that users perform the conditional independence test concerning bias and invariant features to serve as a check metric to ensure the distinguishability in practice.

% This work primarily aims to question the conventional treatment of bias and provide a theoretical foundation for understanding its effects. The experiments presented in this paper are intended to validate our proposed method and do not include experiments on large-scale datasets.
%  We acknowledge this limitation and leave systematic benchmarking on large-scale datasets as an important direction for future work.

\section*{Impact Statement}
We propose a method that strengthens unsupervised domain generalization, enabling models to transfer knowledge more reliably between different data domains. The approach is designed to improve learning efficiency when annotations are limited or unavailable, which is common in real world deployments. The technique is methodological and does not create new ethical concerns beyond those already associated with standard machine learning systems. Overall, our contribution supports more robust and practical domain generalizing models.

\nocite{langley00}

\bibliography{example_paper}
\bibliographystyle{icml2026}

\newpage
\onecolumn
\appendix
\crefalias{section}{appendix}
\crefalias{subsection}{appendix}
\crefalias{subsubsection}{appendix}

\clearpage
\section{Related Work}
\label{related_work}

\noindent\textbf{Domain Generalization} Domain Generalization (DG) aims to develop models that generalize to unseen OOD target domains using only observed source domains for training{~\citep{zou2024towards,zhang2024best,chen2023gaia,prashant2024scalable}}. A common approach is learning invariant representations across all source domains, as seen in Invariant Risk Minimization (IRM)~\citep{irm} and its extensions~\citep{ahuja2020invariant, rosenfeld2020risks, ahuja2021invariance, kamath2021does, krueger2021out}. Other methods leverage adversarial learning~\citep{li2018deep, shao2019multi, zhao2020domain, deng2020representation, matsuura2020domain} to extract domain-invariant features or use disentangled representation learning~\citep{ilse2020diva, wang2020cross, jiang2022invariant} to separate domain-invariant from domain-specific information. Causal inference techniques~\citep{wang2020visual, qi2020two}, such as back-door and front-door rules, help mitigate bias. Furthermore, techniques such as data augmentation~\citep{volpi2019addressing, shi2020towards, huang2021fsdr, li2021simple}, feature augmentation~\citep{mancini2020towards, shu2021open, zhou2024mixstyle}, and network regularization~\citep{seo2020learning, segu2023batch} can reduce overfitting and improve generalization.

\noindent\textbf{Causal Representation Learning}
Causal Representation Learning (CRL) aims to recover latent variables that correspond to the causal generative factors of high-dimensional observations and the mechanisms that link them~\citep{scholkopf2021toward}. Building on the invariance principle, Invariant Causal Prediction formalizes causal variable selection via stability across environments~\citep{peters2016causal}. A rich line of \emph{identifiability} results shows that with auxiliary variables or weak interventions, nonlinear ICA and VAE-based approaches can recover latent sources up to permissible transformations~\citep{hyvarinen2019nonlinear, khemakhem2020variational}. For vision, structured VAEs with causal priors explicitly separate content from style/context~\citep{yang2021causalvae}, and temporal–interventional setups such as Citris leverage intervention sequences to learn identifiable factors~\citep{lippe2022citris}. Beyond model proposals, empirical and theoretical analyses caution that purely unsupervised disentanglement is generally impossible without inductive biases or weak supervision, motivating the use of environment cues or metadata~\citep{locatello2019challenging}. One line of research leverages distributional shifts in latent variables, where nonstationary or environmental variation facilitates the recovery of nonlinear independent components and the underlying causal structures~\citep{kong2022partial, zhang2011general, chen2024caring}. Our approach follows this CRL perspective: we infer a latent space with a VAE, factorize it into invariant content and bias/context, then exploit the bias variables for environment-aware prediction while aligning them with invariant predictions.

\noindent\textbf{Label shift in OOD problems}
Label shift occurs when the label prior $p(y)$ changes between training and test domains while the class-conditional $p(x \mid y)$ remains stable~\citep{lipton2018labelshift}. Classical corrections include EM-based prior adjustment~\citep{saerens2002adjusting} and Black-Box Shift Estimation (BBSE), which uses a calibrated source classifier’s confusion matrix to estimate target priors and reweight predictions~\citep{lipton2018labelshift}. Subsequent work unifies moment-matching and maximum-likelihood estimators and emphasizes the central role of calibration (e.g., bias-corrected temperature scaling, BCTS) for reliable prior estimation~\citep{garg2020unified, li2019regularized}. 
In streaming or nonstationary settings, online label-shift adaptation updates class-prior estimates on the fly without access to target labels~\citep{wu2021online, bai2022adapting}. Benchmarks such as WILDS and Wild-Time document that label shift frequently co-occurs with input shift in real applications (e.g., medical imaging, wildlife monitoring, and temporally evolving text)~\citep{koh2021wilds, yao2022wild}. Consistent with this evidence, we explicitly account for changes in $p(y)$ when adapting to new environments, combining prior correction with environment-aware mixture-of-experts weighting informed by the biased latent variables.

\noindent\textbf{Unsupervised domain adaptation} Unlike DG, unsupervised domain adaptation utilizes unlabeled target data to guide model adaptation. Due to the observation of target distribution, many methods focus on distribution alignment by minimizing domain divergence~\citep{rozantsev2018beyond, sun2016deep, kang2019contrastive, liu2020importance}, using importance reweighting~\citep{jiang2007instance, xu2019unsupervised, fang2020rethinking}, and learning with adversarial discriminators, such as DANN~\citep{dann}, ADDA~\citep{tzeng2017adversarial}, and WDDA~\citep{shen2018wasserstein}. Beyond distribution alignment, some methods focus on invariant information-based adaptation, including self-training~\citep{lee2013pseudo, zou2018unsupervised, saito2020universal, chen2020deep, kumar2020understanding, liu2021cycle}, which uses pseudo-labels from a source-trained model to guide target domain learning, and batch normalization-based regularization~\citep{li2018adaptive, liu2021adapting}, which adapts only BN layers while keeping other parameters invariant. Recently, disentangled representation learning~\citep{kong2022partial,li2023subspace} has shown strong potential for adaptation by separating domain-invariant content from domain-specific bias.

\noindent\textbf{Test-Time domain adaptation} In real-world scenarios, the target domain distribution is often unknown in advance, making it challenging to train an adaptive model beforehand, which motivates test time adaptation. The typical solution is Test-Time Training (TTT)~\citep{sun2020test, liu2021ttt++, mirza2023actmad, li2023robustness}, which adapts models during inference by optimizing the same self-supervised learning objective used during pre-training. Besides, even without access to the target distribution, adaptation remains feasible with invariant information learned from the source domain, such as through self-training~\citep{chen2022contrastive, tomar2023tesla, su2022revisiting, goyal2022test} or network regularization~\citep{wang2020tent, liang2020think}. The most closely related work to ours is Stable Feature Boosting (SFB), which leverages unstable features alongside invariant representations to generate pseudo-labels when only the image is affected by environmental factors. Our approach differs in three key ways. First, we delve deeper into latent variable modeling to provide a more comprehensive explanation of bias. Second, our model accommodates more complex bias scenarios, allowing the environment variable to influence both the image and the label. Third, we integrate invariant correction and expert reweighting to effectively utilize biased data.

\section{{Proof and Discussion of section 2}}

\subsection{{Proof of the identification of latent variables}}
\label{sec:detailed_sub}

\begin{lemma}
(\textbf{Block-wise identification of $\mathbf{c}$ and $\mathbf{b}$}; cf.~\cite{kong2022partial}.)
Assume the data-generating process in Figure~2: the observation is $\mathbf{x}=g(\mathbf{z})$ with $\mathbf{z}=(\mathbf{b}^\top,\mathbf{c}^\top)^\top$, where $g$ is an invertible diffeomorphism. 
Let $\hat{\mathbf{x}}=\hat{g}(\hat{\mathbf{z}})$ be another invertible generator with latent $\hat{\mathbf{z}}=(\hat{\mathbf{b}}^\top,\hat{\mathbf{c}}^\top)^\top$, and suppose the model matches the conditional distribution
$p_{\hat{\mathbf{x}}\mid\mathbf{e},\mathbf{y}}=p_{\mathbf{x}\mid\mathbf{e},\mathbf{y}}$ for all $(\mathbf{e},\mathbf{y})$. 
Assume:

\begin{itemize}
    \item A1 (\emph{Smooth and positive density}): $p_{\mathbf{z}\mid\mathbf{e},\mathbf{y}}(\mathbf{z}\mid\mathbf{e},\mathbf{y})$ is smooth with strictly positive density on its support.
    \item A2 (\emph{Conditional independence and content invariance}): 
    Given $(\mathbf{e},\mathbf{y})$, the coordinates of $\mathbf{z}$ are independent,
    \[
        \log p_{\mathbf{z}\mid\mathbf{e},\mathbf{y}}(\mathbf{z}\mid\mathbf{e},\mathbf{y})
        = \sum_{i=1}^{n} q_i(z_i;\mathbf{e},\mathbf{y}),
    \]
    where $n=n_b+n_c$, $z_1,\ldots,z_{n_b}$ correspond to $\mathbf{b}$ and $z_{n_b+1},\ldots,z_{n}$ correspond to $\mathbf{c}$. 
    Moreover, content coordinates are environment-invariant: for $i>n_b$,
    $q_i(c_{i-n_b};\mathbf{e},\mathbf{y})=q_i(c_{i-n_b};\mathbf{y})$.
    \item A3 (\emph{Linear independence}): For any $\mathbf{b}\in\mathcal{B}\subseteq\mathbb{R}^{n_b}$ (and the relevant $\mathbf{y}$), there exist $n_b+1$ environments $e_0,e_1,\ldots,e_{n_b}$ such that the $n_b$ vectors
    \[
        \bm{w}(\mathbf{b},e_j,\mathbf{y})-\bm{w}(\mathbf{b},e_0,\mathbf{y}),\quad j=1,\ldots,n_b,
    \]
    are linearly independent, where
    \[
        \bm{w}(\mathbf{b},e_i,\mathbf{y})
        := \Big(\tfrac{\partial q_1(b_1;e_i,\mathbf{y})}{\partial b_1},\ldots,\tfrac{\partial q_{n_b}(b_{n_b};e_i,\mathbf{y})}{\partial b_{n_b}}\Big)^\top.
    \]
    \item A4 (\emph{Domain variability}): There exist $e_i\neq e_j$ such that, for the relevant $\mathbf{y}$ and for any measurable set $A_{\mathbf{z}}\subseteq\mathcal{Z}$ with non-zero probability that cannot be written as $\Omega_{\mathbf{c}}\times\mathcal{B}$ for any $\Omega_{\mathbf{c}}\subset\mathcal{C}$,
    \[
        \int_{A_{\mathbf{z}}} p(\mathbf{z}\mid \mathbf{e}_i,\mathbf{y})\,d\mathbf{z} \;\neq\;
        \int_{A_{\mathbf{z}}} p(\mathbf{z}\mid \mathbf{e}_j,\mathbf{y})\,d\mathbf{z}.
    \]
\end{itemize}

Define the learned invariant block $\hat{\mathbf{c}}$ to be those coordinates of $\hat{\mathbf{z}}$ whose conditional distribution does not vary with the environment: $p_{\hat{c}_j\mid\mathbf{e},\mathbf{y}}$ is independent of $\mathbf{e}$ for all $j$ (and let the remaining coordinates form $\hat{\mathbf{b}}$). 
Then the learned $(\hat{\mathbf{c}},\hat{\mathbf{b}})$ are block-wise identifiable, i.e., there exist invertible functions $h_c$ and $h_b$ (up to permutations and coordinate-wise reparameterizations) such that $\mathbf{c}=h_c(\hat{\mathbf{c}})$ and $\mathbf{b}=h_b(\hat{\mathbf{b}})$.
\end{lemma}

\begin{proof}
We start from the matched conditional distribution $p_{\hat{\mathbf{x}}\mid\mathbf{e},\mathbf{y}}=p_{\mathbf{x}\mid\mathbf{e},\mathbf{y}}$ and introduce the diffeomorphism
\[
    h := g^{-1}\circ \hat{g}:\;\hat{\mathbf{z}}\mapsto \mathbf{z}.
\]
Then, for all $(\mathbf{e},\mathbf{y})$,
\begin{equation}
\label{equ:t1_1}
    p_{\hat{g}(\hat{\mathbf{z}})\mid\mathbf{e},\mathbf{y}} = p_{g(\mathbf{z})\mid\mathbf{e},\mathbf{y}}.
\end{equation}
By the change-of-variables formula,
\begin{equation}
\label{equ:t1_2}
    p_{\hat{\mathbf{z}}\mid\mathbf{e},\mathbf{y}}(\hat{\mathbf{z}})
    \;=\;
    p_{\mathbf{z}\mid\mathbf{e},\mathbf{y}}\!\big(h(\hat{\mathbf{z}})\big)\;
    \big|\det \mathbf{J}_{h}(\hat{\mathbf{z}})\big|,
\end{equation}
where $\mathbf{J}_{h}(\hat{\mathbf{z}})$ is the Jacobian of $h$ at $\hat{\mathbf{z}}$. Since $g,\hat{g}$ are diffeomorphisms, $h$ is invertible and $\det \mathbf{J}_{h}(\hat{\mathbf{z}})\neq 0$.

By A2 (conditional independence), letting $q_i(z_i;\mathbf{e},\mathbf{y}):=\log p_{z_i\mid\mathbf{e},\mathbf{y}}(z_i)$ and 
$\hat{q}_i(\hat{z}_i;\mathbf{e},\mathbf{y}):=\log p_{\hat{z}_i\mid\mathbf{e},\mathbf{y}}(\hat{z}_i)$, we have
\begin{equation}
\label{equ:t1_3}
    \log p_{\mathbf{z}\mid\mathbf{e},\mathbf{y}}(\mathbf{z}\mid\mathbf{e},\mathbf{y})=\sum_{i=1}^{n} q_i(z_i;\mathbf{e},\mathbf{y}),
    \quad
    \log p_{\hat{\mathbf{z}}\mid\mathbf{e},\mathbf{y}}(\hat{\mathbf{z}}\mid\mathbf{e},\mathbf{y})=\sum_{i=1}^{n} \hat{q}_i(\hat{z}_i;\mathbf{e},\mathbf{y}).
\end{equation}
Combining \eqref{equ:t1_2}–\eqref{equ:t1_3} and writing $\mathbf{z}=h(\hat{\mathbf{z}})$ gives
\begin{equation}
\label{equ:t1_4}
    \sum_{i=1}^{n} q_i\big(z_i;\mathbf{e},\mathbf{y}\big)\;+\; \log\big|\det \mathbf{J}_{h}(\hat{\mathbf{z}})\big|
    \;=\;
    \sum_{i=1}^{n} \hat{q}_i\big(\hat{z}_i;\mathbf{e},\mathbf{y}\big).
\end{equation}

\paragraph{Step 1: $\mathbf{B}=\partial \mathbf{b}/\partial \hat{\mathbf{c}}=\mathbf{0}$.}
Differentiate \eqref{equ:t1_4} with respect to $\hat{c}_j$ for any $j\in\{1,\ldots,n_c\}$:
\begin{equation}
\label{equ:app_1_order_der}
    \sum_{i=1}^{n}\frac{\partial q_i(z_i;\mathbf{e},\mathbf{y})}{\partial z_i}\cdot \frac{\partial z_i}{\partial \hat{c}_j}
    \;+\;\frac{\partial}{\partial \hat{c}_j}\log\big|\det \mathbf{J}_{h}(\hat{\mathbf{z}})\big|
    \;=\;
    \frac{\partial \hat{q}_j(\hat{c}_j;\mathbf{e},\mathbf{y})}{\partial \hat{c}_j}.
\end{equation}
Note that $\mathbf{J}_{h}$ does not depend on $\mathbf{e}$, since $h=g^{-1}\circ\hat{g}$ is a mapping between latent spaces only. 
Fix $e_0,\ldots,e_{n_b}$ as in A3 and subtract the version of \eqref{equ:app_1_order_der} at $e_0$ from that at $e_k$ ($k=1,\ldots,n_b$); the Jacobian term cancels:
\begin{equation}
\label{equ:t1_5}
    \sum_{i=1}^{n}\!\left(
        \frac{\partial q_i(z_i;e_k,\mathbf{y})}{\partial z_i}
        -\frac{\partial q_i(z_i;e_0,\mathbf{y})}{\partial z_i}
    \right)\!\cdot \frac{\partial z_i}{\partial \hat{c}_j}
    \;=\;
    \frac{\partial \hat{q}_j(\hat{c}_j;e_k,\mathbf{y})}{\partial \hat{c}_j}
    -\frac{\partial \hat{q}_j(\hat{c}_j;e_0,\mathbf{y})}{\partial \hat{c}_j}.
\end{equation}
By A2, for content coordinates ($i>n_b$) the difference on the left is zero because $q_i$ does not depend on $\mathbf{e}$. 
By the definition of the learned invariant block, $p_{\hat{c}_j\mid\mathbf{e},\mathbf{y}}$ is invariant in $\mathbf{e}$, hence the right-hand side of \eqref{equ:t1_5} is zero. 
Therefore,
\[
    \sum_{i=1}^{n_b}\!\left(
        \frac{\partial q_i(b_i;e_k,\mathbf{y})}{\partial b_i}
        -\frac{\partial q_i(b_i;e_0,\mathbf{y})}{\partial b_i}
    \right)\!\cdot \frac{\partial b_i}{\partial \hat{c}_j}=0,\quad k=1,\ldots,n_b.
\]
By A3, the $n_b$ vectors 
$\bm{w}(\mathbf{b},e_k,\mathbf{y})-\bm{w}(\mathbf{b},e_0,\mathbf{y})$ 
are linearly independent, so the only solution is
\[
    \frac{\partial b_i}{\partial \hat{c}_j}=0,\qquad \forall\, i\in\{1,\ldots,n_b\},\; j\in\{1,\ldots,n_c\}.
\]
Let the Jacobian of $h$ be block-partitioned as
\[
    \mathbf{J}_{h}=
    \begin{bmatrix}
        \mathbf{A} & \mathbf{B} \\
        \mathbf{C} & \mathbf{D}
    \end{bmatrix}
    =
    \begin{bmatrix}
        \partial \mathbf{b}/\partial \hat{\mathbf{b}} & \partial \mathbf{b}/\partial \hat{\mathbf{c}} \\
        \partial \mathbf{c}/\partial \hat{\mathbf{b}} & \partial \mathbf{c}/\partial \hat{\mathbf{c}}
    \end{bmatrix}.
\]
The conclusion above yields $\mathbf{B}=\mathbf{0}$. Since $h$ is invertible, $\mathbf{J}_{h}$ has full rank, and therefore there exists a function $h_b$ such that $\mathbf{b}=h_b(\hat{\mathbf{b}})$.

\paragraph{Step 2: $\mathbf{C}=\partial \mathbf{c}/\partial \hat{\mathbf{b}}=\mathbf{0}$.}
If $\mathbf{C}\neq \mathbf{0}$, then changes in $\hat{\mathbf{b}}$ would induce changes in $\mathbf{c}$. 
Combining A4 (which ensures that environment variation manifests on non-product measurable sets) with positivity and invertibility would lead to a contradiction with the matched conditional distribution in \eqref{equ:t1_1}. 
A detailed argument follows [Theorem~4.2, Steps~1--3] in \citep{kong2022partial}. 
Hence $\mathbf{C}=\mathbf{0}$, and by full rank there exists $h_c$ such that $\mathbf{c}=h_c(\hat{\mathbf{c}})$.

Therefore, $\mathbf{J}_{h}$ is block-diagonal, which establishes block-wise identifiability of $(\hat{\mathbf{c}},\hat{\mathbf{b}})$.
\end{proof}

\subsection{{Discussion of Lemma 2.1 }}
\label{sec:dis of lemma}
{\paragraph{Concepts Explanation }We would like to detailedly explain some concepts in Lemma 2.1 as follows:  (1)Here “Positive support” means the set of all points where a probability distribution assigns strictly positive probability or density (i.e., $p(x) > 0$). This is emphasized because the probability distribution may have some point in 0 for example the distribution are uniform distribution. To maintain academic rigor, this hypothesis was highlighted. (2)The $\mathcal{B}$ is introduced in A3 and it  refers to the set (or space) of domain-specific latent variables.   (3)Linear independence means that changes in different environments produce distinct, non-redundant effects on the domain-specific latent variable $b$; none of these variations can be expressed as a combination of others. This is a reasonable assumption because, with multiple diverse environments, it ensures that each domain provides unique information about how $b$ varies, making the latent factors identifiable rather than entangled.  The detailed explanations of this lemma can be found in paper \cite{kong2022partial}.}

\paragraph{{Discussion of Assumptions.}}

{
\textbf{(A1) Smooth and positive density.}
This assumption requires that the conditional densities \(p(\rvz \mid e,y)\) are smooth and strictly positive on their supports. 
Intuitively, it ensures that the latent space is continuous and that every possible latent configuration has a non-zero probability of occurrence. 
In practice, this assumption is easily satisfied: with sufficient data, empirical or neural-network-based density estimations naturally produce smooth and positive densities. 
Since our framework adopts a VAE-based generative model, the latent distribution is modeled by Gaussian families whose densities are positive everywhere, satisfying this assumption automatically.
However, this assumption may fail when the latent space is discrete or piecewise constant, such as in deterministic autoencoders or models that collapse the latent variance to zero. 
In those degenerate cases, \(p(\rvz \mid e,y)\) no longer has a smooth positive density, and the identifiability conditions in Lemma~2.1 no longer hold rigorously.
}

{
\textbf{(A2) Conditional independence and content invariance.}
This assumption states that the coordinates of the latent variables \(\rvz = (z_1,\dots,z_n)\) are conditionally independent given \(e\) and \(y\), i.e.,
\(\log p(\rvz \mid e,y) = \sum_i q_i(z_i; e,y)\).
Conceptually, this means that each latent dimension corresponds to an independent generative factor once the environment and label are specified.
It also implies that content variables are invariant across environments—only the bias-related coordinates are affected by domain change. 
This assumption is reasonable in our framework because, when the generative process is disentangled and the noise level is small, the variation of \(\rvz\) can indeed be attributed to independent factors conditioned on \(e\) and \(y\). 
Moreover, the VAE objective encourages conditional independence through the factorization of the approximate posterior. 
However, this assumption may be violated when latent variables interact strongly (e.g., through nonlinear dependencies) or when noise in the data introduces correlated variations between coordinates. 
In such cases, exact independence cannot be guaranteed, but approximate independence is often sufficient for stable learning and representation disentanglement.
}

{
\textbf{(A3) Linear independence.}
This assumption imposes a technical requirement on the partial derivatives of the latent log-density functions with respect to environment changes. 
Intuitively, this means that the changes introduced by different environments are sufficiently diverse so that the bias components can be linearly separated from one another.
In practice, this condition cannot be enforced directly, but it is often satisfied when multiple environments exist and differ in a sufficiently rich manner. 
Empirically, in synthetic datasets, when the number of environments is large, linear independence almost always holds, guaranteeing successful disentanglement. 
In real-world data, this condition may be partially violated because the number of distinct environments if we think this is domain number is limited. 
Nevertheless, prior works~\citep{kong2022partial,li2023subspace} and our empirical results show that even approximate linear independence is sufficient to achieve robust representations. This is may because the so-called domain in domain generalization are large domain and in this domain that there existing small domains to satisfy the assumptions.
}

{
\textbf{(A4) Domain variability.}
This assumption requires that, for at least one pair of distinct environments \(e_i \neq e_j\) and the corresponding class \(y\), there exists a measurable subset \(A_z \subset \mathcal{Z}\) with non-zero probability that cannot be expressed as a purely content-aligned region \(\Omega_c \times B\), such that
\[
\int_{A_z} p(\rvz \mid e_i, y)\,dz \;\neq\; \int_{A_z} p(\rvz \mid e_j, y)\,dz .
\]
This ensures that the conditional latent distribution genuinely varies across domains, capturing environment-dependent factors beyond the content component.
Because the mapping \(\rvx = g(\rvz)\) is deterministic and noiseless, \(p(\rvx \mid e,y)\) is the push forward of \(p(\rvz \mid e,y)\); therefore, if \(p(\rvz \mid e_i, y)=p(\rvz \mid e_j, y)\), we would also have \(p(\rvx \mid e_i, y)=p(\rvx \mid e_j, y)\). 
However, this equality does not hold in domain generalization, where environments are explicitly assumed to induce observable shifts. 
To see this, consider that if the joint distributions were identical across environments,
\[
p(\rvx, y, e_i)=p(\rvx, y, e_j),
\]
then by the chain rule,
\[
p(\rvx, y, e)=p(\rvx \mid y,e)\,p(y \mid e)\,p(e).
\]
Integrating both sides over a measurable subset \(A_x\) gives
\[
\int_{A_x} p(\rvx \mid y,e_i)\,p(y \mid e_i)\,dx = \int_{A_x} p(\rvx \mid y,e_j)\,p(y \mid e_j)\,dx.
\]
If \(p(\rvx \mid y,e_i) = p(\rvx \mid y,e_j)\) held for all \(A_x\), then equality would require \(p(y \mid e_i)=p(y \mid e_j)\). 
Yet, in our setting, we explicitly consider {label shift}, meaning \(p(y \mid e)\) varies with \(e\); hence, this equality cannot generally hold.
Consequently, A4 guarantees genuine domain variability at the level of \(p(\rvz \mid e,y)\), which propagates to \(p(\rvx \mid e,y)\) through the deterministic mapping.
Nevertheless, A4 may fail under degenerate or unrealistic conditions, such as:
\begin{enumerate}[label=(\roman*)]
    \item when environments do not induce any shift in latent factors (\(p(\rvz \mid e_i, y)=p(\rvz \mid e_j, y)\) for all \(i,j\)), making domains statistically indistinguishable;
    \item when the mapping \(g\) is stochastic or non-injective, collapsing different latent codes into identical observations, which hides latent variability in the observed space;
    \item or when label shift is absent (\(p(y\mid e_i)=p(y\mid e_j)\)), eliminating one key source of distributional difference across domains.
\end{enumerate}
These cases violate the core assumption that domains differ in meaningful, non-content-related ways and thus fall outside our framework.}

\subsection{Better predictor with adding bias}
\label{better-add_bias}
\begin{proof}
Under Assumptions A1–A4, we compare the Bayes-optimal risks for predictors based on \(c\) versus \((c,b)\).

\paragraph{Log‐loss.}
The Bayes-optimal log-loss risks satisfy
\[
R_{\log}\bigl(q^*(\cdot\mid c)\bigr)
=H(Y\mid C),
\qquad
R_{\log}\bigl(q^*(\cdot\mid c,b)\bigr)
=H(Y\mid C,B),
\]
where
\[
H(Y\mid C)
=-\mathbb{E}_C\bigl[\mathbb{E}_{Y\mid C}[\log p(Y\mid C)]\bigr],
\quad
H(Y\mid C,B)
=-\mathbb{E}_{C,B}\bigl[\mathbb{E}_{Y\mid C,B}[\log p(Y\mid C,B)]\bigr].
\]
There is a positive-measure set of \((c,b)\) on which \(p(y\mid c,b)\neq p(y\mid c)\).  Hence
\[
I(Y;B\mid C)
=H(Y\mid C)-H(Y\mid C,B)
=\mathbb{E}_{C,B}\!\bigl[D_{\mathrm{KL}}(p(y\mid c,b)\,\|\,p(y\mid c))\bigr]
>0,
\]
so \(H(Y\mid C,B)<H(Y\mid C)\) and therefore
\[
R_{\log}\bigl(q^*(\cdot\mid c,b)\bigr)
< R_{\log}\bigl(q^*(\cdot\mid c)\bigr).
\]

\paragraph{Squared‐error loss.}
The Bayes‐optimal squared‐error risks satisfy
\[
R_{\mathrm{sq}}\bigl(f^*(c)\bigr)
=\mathbb{E}\bigl[\mathrm{Var}(Y\mid C)\bigr],
\qquad
R_{\mathrm{sq}}\bigl(f^*(c,b)\bigr)
=\mathbb{E}\bigl[\mathrm{Var}(Y\mid C,B)\bigr].
\]
By the law of total variance,
\[
\mathrm{Var}(Y\mid C)
=\mathbb{E}_B\bigl[\mathrm{Var}(Y\mid C,B)\bigr]
\;+\;
\mathrm{Var}_B\!\bigl(\mathbb{E}[Y\mid C,B]\bigm|C\bigr).
\]
Since  \(B\) influences the conditional mean of \(Y\) given \(C\), the second term is strictly positive on a set of positive probability.  Thus
\[
\mathbb{E}\bigl[\mathrm{Var}(Y\mid C,B)\bigr]
<\mathbb{E}\bigl[\mathrm{Var}(Y\mid C)\bigr],
\]
and consequently
\[
R_{\mathrm{sq}}\bigl(f^*(c,b)\bigr)
< R_{\mathrm{sq}}\bigl(f^*(c)\bigr).
\]
This completes the proof that incorporating \(b\) strictly reduces both log‐loss and squared‐error Bayes risks.
\end{proof}

\subsection{The derivation of the KL optimization} \label{section:KL}

We want to solve the following problem:

\[
q^*(y \mid \rvb)
\;=\;
\underset{q(\cdot \mid \rvb)}{\arg\min}
\;
\mathbb{E}_{e \sim p(e \mid \rvb)}
\Bigl[
D_{\mathrm{KL}}\bigl( p(y \mid \rvb, e) \,\|\, q(y \mid \rvb)\bigr)
\Bigr].
\]

Let
\[
\mathcal{L}[q(\cdot \mid \rvb)]
\;=\;
\mathbb{E}_{e\sim p(e\mid \rvb)}
\Bigl[
D_\mathrm{KL}\bigl( p(y \mid \rvb, E)\,\|\, q(y \mid \rvb)\bigr)
\Bigr].
\]
By the definition of KL divergence, for discrete $y$:
\[
D_{\mathrm{KL}}\bigl( P(y) \,\|\, Q(y)\bigr)
=
\sum_{y} P(y)\,\log \frac{P(y)}{Q(y)}.
\]
Hence, for each $e$,
\[
D_{\mathrm{KL}}\bigl( p(y \mid \rvb, e)\,\|\, q(y \mid \rvb)\bigr)
=
\sum_{y} p(y \mid \rvb,e)\,\log \frac{ p(y \mid \rvb,e)}{ q(y \mid \rvb) }.
\]
Therefore,
\[
\mathcal{L}[q]
=
\sum_{e} p(e \mid \rvb)\,
\sum_{y} p(y \mid \rvb,e)\,\log \frac{ p(y \mid \rvb,e)}{ q(y \mid \rvb) }.
\]
We want to choose $q(y \mid \rvb)$ to minimize $\mathcal{L}[q]$, which needs $q(y \mid \rvb)$ to satisfy
\(
\sum_{y} q(y \mid \rvb) \;=\; 1,
\text{for each fixed }\rvb.
\)
To incorporate this constraint, we introduce a Lagrange multiplier $\lambda$ and consider the Lagrangian:
\[
\mathcal{J}[q,\lambda]
=
\sum_{e} p(e \mid \rvb)
\sum_{y} p(y \mid \rvb,e)\,\log \!\biggl[\frac{ p(y \mid \rvb,e)}{ q(y \mid \rvb)}\biggr]
\;+\;
\lambda\Bigl(\sum_{y} q(y\mid \rvb) - 1\Bigr).
\]

To find the minimizing $q$, we set
\[
\frac{\partial \mathcal{J}}{\partial q(y\mid \rvb)} \;=\; 0.
\]

First, note
\[
\log \frac{p(y \mid \rvb,e)}{q(y \mid \rvb)}
\;=\;
\log p(y \mid \rvb,e) - \log q(y \mid \rvb).
\]
Only the term $-\log q(y \mid \rvb)$ depends on $q$. For a specific $y$, 
\[
\frac{\partial}{\partial q(y \mid \rvb)}\bigl[-\log q(y' \mid \rvb)\bigr]
=
\begin{cases}
- \frac{1}{q(y \mid \rvb)}, & \text{if } y' = y,\\
0, & \text{if } y' \neq y.
\end{cases}
\]
Hence, for each $e$:
\[
\frac{\partial}{\partial q(y \mid \rvb)}
\biggl[
\sum_{y'} p(y' \mid \rvb,e)\,\log \frac{p(y' \mid \rvb,e)}{q(y' \mid \rvb)}
\biggr]
=
-\frac{p(y \mid \rvb,e)}{q(y \mid \rvb)}.
\]
Summing over $e$,
\[
\frac{\partial}{\partial q(y \mid \rvb)}
\sum_{e} p(e \mid \rvb)
\sum_{y'} p(y' \mid \rvb,e)\,\log \frac{p(y' \mid \rvb,e)}{q(y' \mid \rvb)}
=
-\frac{1}{q(y \mid \rvb)} \sum_{e} p(e \mid \rvb)\,p(y \mid \rvb,e).
\]
Also,
\[
\frac{\partial}{\partial q(y \mid \rvb)}
\Bigl[
\lambda \bigl(\sum_{y'} q(y' \mid \rvb) - 1\bigr)
\Bigr]
=
\lambda.
\]
Hence,
\[
\frac{\partial \mathcal{J}}{\partial q(y\mid \rvb)}
=
-\frac{1}{q(y \mid \rvb)} \sum_{e} p(e \mid \rvb)\,p(y \mid \rvb,e)
+ \lambda.
\]
Setting this to zero:
\[
-\frac{1}{q(y \mid \rvb)} \sum_{e} p(e \mid \rvb)\,p(y \mid \rvb,e)
+ \lambda
= 0,
\]
which gives
\[
\frac{1}{q(y \mid \rvb)} \,\sum_{e} p(e \mid \rvb)\,p(y \mid \rvb,e)
= \lambda.
\]
Thus
\[
q(y \mid \rvb)
= \frac{1}{\lambda} \sum_{e} p(e \mid \rvb)\,p(y \mid \rvb,e).
\]

We require $\sum_{y} q(y \mid \rvb) = 1$. Then:
\[
1
= \sum_{y} q(y \mid \rvb)
= \sum_{y}
\frac{1}{\lambda}
\sum_{e} p(e \mid \rvb)\,p(y \mid \rvb,e)
=
\frac{1}{\lambda}
\sum_{e} p(e \mid \rvb)
\sum_{y} p(y \mid \rvb,e).
\]
Since $\sum_{y} p(y \mid \rvb,e) = 1$ for each $e$, we get
\[
\sum_{e} p(e \mid \rvb)\,\underbrace{\sum_{y} p(y \mid \rvb,e)}_{=1}
= \sum_{e} p(e \mid \rvb)
= 1.
\]
Hence
\[
1
= \frac{1}{\lambda}\cdot 1
\quad \Longrightarrow \quad
\lambda = 1.
\]
Therefore,
\[
q(y \mid \rvb)
=
\sum_{e} p(e \mid \rvb)\,p(y \mid \rvb,e).
\]

\subsection{Decomposition and invariant}
\label{decomposition}
Assume by causal representation learning, we have obtained \( \mathbf{c} \) (stable latent variables) and \( \mathbf{b} \) (unstable latent variables). Our goal is to estimate \( p(y \mid \mathbf{c}, \mathbf{b}) \).

\begin{equation}
\begin{aligned}
    &\frac{p(y = 1 \mid \mathbf{c}, \mathbf{b})}{p(y = 0 \mid \mathbf{c}, \mathbf{b})} \\
    &= \frac{p( \mathbf{c}, \mathbf{b}, y = 1)}{ p( \mathbf{c}, \mathbf{b},y = 0)} \\
    &= \frac{p(\mathbf{b}\mid y = 1)p(\mathbf{c} \mid  y = 1)p(y = 1)}{ p(\mathbf{b} \mid y = 0)p(\mathbf{c} \mid y = 0)p(y = 0)} \\
    &= \frac{p( y = 1 \mid \mathbf{b})p(\mathbf{c} \mid  y = 1)}{ p(y = 0 \mid \mathbf{b})p(\mathbf{c} \mid  y = 0)} \\
    &= \frac{p( y = 1 \mid \mathbf{b} ) \frac{p(y = 1 \mid \mathbf{c})}{p(y = 1)}}{ p(y = 0 \mid \mathbf{b})\frac{p(y = 0 \mid \mathbf{c})}{p(y = 0)}}
\end{aligned}
\label{eq:invariant_equation}
\end{equation}

Assume we have \( e_{all}=\{e_0,e_1,...,e_n\} \), representing different domains. We assume the source domains are \( e_{s}=\{e_0,e_1,e_{s}\} \), and target domains are \( e_{t}=\{e_{s+1},...,e_n\} \).

\begin{equation}
\begin{aligned} 
    & \forall e_i,e_j \in e_{all} \\
    & \quad  \mathbf{c} \textbf{ only depends on } y \\
    &{p(\mathbf{c} \mid y = 1,e=e_i)}=p(\mathbf{c} \mid y = 1,e=e_j) \\
    & \quad \textbf{Use Bayes rule} \\
    & \frac{p(y = 1,e=e_i \mid \mathbf{c})}{p(y = 1, e=e_i)} =\frac{p(y = 1,e=e_j \mid \mathbf{c})}{p(y = 1, e=e_j)}\\
\end{aligned}
\label{eq:bayes_equation}
\end{equation}

\[
p(y = 1 \mid \mathbf{c}) =\sum_{e \in e_s} { \frac{p(y = 1,e=e \mid \mathbf{c} )}{p(y = 1,e=e)}p(e \mid e_s) }
\]

This result allows us to estimate \( p(y = 1 \mid \mathbf{c}) \) using data from source domains and apply it in the target domain.

\subsection{Proof of \cref{eq:tta}}
\label{prof_tta}
% \paragraph{Proof of \cref{eq:tta}.}
The key observation is that because \(\hat{y}\) only depends on \(y\) (through \(\rvc\)), we can relate the conditional distribution of the pseudo-label \(\hat{y}\) given \(\rvb\) to the distribution of the true label \(y\). Formally,
\[
\begin{aligned}
     &p(\hat{y}=1|\rvb) = p(\hat{y}=1|y=1,\rvb)p(y=1|\rvb) +p(\hat{y}=1|y=0,\rvb)p(y=0|\rvb) \\
    & \textbf{Consider }
     \hat{y} \textbf{ only depends on } y, \hat{y} \indep \rvb | y \\
    &= p(\hat{y}=1|y=1)p(y=1|\rvb)  +p(\hat{y}=1|y=0)p(y=0|\rvb)\\
    &= h_1 p(y=1|\rvb) +(1-h_0)(1-p(y=1|\rvb))
\end{aligned}
\]
Solving for \(p(y=1 \mid \rvb)\) gives \cref{eq:tta}.

\subsection{Proof and Discussion of \eqref{thm:perf-corr}}
\label{prof_effect}

\paragraph{Setup and assumptions.}
Write
\[
h_0=p(\hat y=0\mid y=0),\qquad
h_1=p(\hat y=1\mid y=1),\qquad
J=h_0+h_1-1\in(0,1].
\]
Let $p(y=1\mid \rvb)$ denote the target conditional and $q(\rvb)=p(\hat y=1\mid \rvb)$.
We assume (i) $\rvc\perp e\mid y$ and $\rvb\perp \rvc\mid y$, hence $\hat y=f_c(\rvc)$ implies $\hat y\perp \rvb\mid y$; 
(ii) \emph{class-conditional noise} (CCN): $p(\hat y=1\mid y=1,\rvb,e)=h_1$ and $p(\hat y=1\mid y=0,\rvb,e)=1-h_0$ for all $(\rvb,e)$; 
(iii) a margin assumption $p(y=1\mid \rvb)\in[\rho,1-\rho]$ almost surely, for some $\rho\in(0,1/2)$.

\paragraph{Step 1: Identification under CCN.}
By $\hat y\perp \rvb\mid y$ and CCN,
\begin{align*}
q(\rvb)
&=p(\hat y=1\mid \rvb)
= p(\hat y=1\mid y=1)\,p(y=1\mid \rvb)+p(\hat y=1\mid y=0)\,p(y=0\mid \rvb)\\
&= (1-h_0)+J\,p(y=1\mid \rvb),
\end{align*}
hence
\begin{equation}
\label{eq:inverse-appendix}
p(y=1\mid \rvb)=\frac{q(\rvb)+h_0-1}{J}.
\end{equation}

\paragraph{Step 2: Linearization of the corrected estimator and $L^2$ bound.}
Define the corrected bias predictor
\[
\widehat f(\rvb)=\frac{\widehat q(\rvb)+\hat h_0-1}{\hat h_0+\hat h_1-1},
\]
where $\widehat q(\rvb)$ estimates $q(\rvb)$ and $(\hat h_0,\hat h_1)$ estimate $(h_0,h_1)$ from sources.
Consider the map $F(q,h_0,h_1)=(q+h_0-1)/(h_0+h_1-1)$.
A first-order expansion of $F$ at $(q,h_0,h_1)$ yields, pointwise in $\rvb$,
\begin{equation}
\label{eq:delta-linear}
\widehat f(\rvb)-p(y=1\mid \rvb)
=
\frac{\widehat q(\rvb)-q(\rvb)}{J}
+\frac{1-p(y=1\mid \rvb)}{J}\,(\hat h_0-h_0)
-\frac{p(y=1\mid \rvb)}{J}\,(\hat h_1-h_1)
+\mathrm{h.o.t.}
\end{equation}
(The identities $\partial F/\partial q=1/J$, $\partial F/\partial h_0=(1-p)/J$, and $\partial F/\partial h_1=-p/J$ follow from \eqref{eq:inverse-appendix}.)
Squaring \eqref{eq:delta-linear}, using $0\le p(y=1\mid \rvb)\le 1$, and taking expectation over $\rvb\sim p(\rvb)$, we obtain
\begin{equation}
\label{eq:l2-bound-appendix}
\mathbb{E}\big[(\widehat f(\rvb)-p(y=1\mid \rvb))^2\big]
\;\le\;
\frac{2}{J^2}\,\mathbb{E}\big[(\widehat q(\rvb)-q(\rvb))^2\big]
+\frac{C}{J^2}\,\Big((\hat h_0-h_0)^2+(\hat h_1-h_1)^2\Big)
+o(1),
\end{equation}
for an absolute constant $C>0$ (the $o(1)$ term collects higher-order products of estimation errors).

\paragraph{Step 3: From $L^2$ error to KL risk.}
For any $p,q\in[\rho,1-\rho]$,
\begin{equation}
\label{eq:kl-smooth-appendix}
\mathrm{KL}\big(\mathrm{Bern}(p)\,\|\,\mathrm{Bern}(q)\big)\;\le\;C_\rho\,(p-q)^2,
\qquad C_\rho=\frac{1}{\rho(1-\rho)}.
\end{equation}
Therefore, with the risk $\mathcal R_{(b)}(g)=\mathbb{E}_{\rvb}[\mathrm{KL}(\mathrm{Bern}(p(y=1\mid \rvb))\,\|\,\mathrm{Bern}(g(\rvb)))]$ and $\mathcal R_{(b)}(p)=0$,
\begin{align*}
\mathcal R_{(b)}(\widehat f)
&\le C_\rho\,\mathbb{E}\big[(\widehat f(\rvb)-p(y=1\mid \rvb))^2\big]\\
&\overset{\eqref{eq:l2-bound-appendix}}{\le}
\frac{c_1}{J^2}\,\mathbb{E}\big[(\widehat q(\rvb)-q(\rvb))^2\big]
+\frac{c_2}{J^2}\,\Big((\hat h_0-h_0)^2+(\hat h_1-h_1)^2\Big),
\end{align*}
where $c_1=2C_\rho$ and $c_2=C_\rho C$ depend only on the margin parameter $\rho$.
This is exactly the claimed upper bound \eqref{eq:clean-risk-bound}.
\hfill\qed

\paragraph{Discussion.}
The bound in \eqref{eq:clean-risk-bound} shows that the \emph{entire} target risk of the corrected predictor, $\mathcal R_{(b)}(\widehat f)$, is controlled by a data–driven \emph{correction cost}:
(i) the estimation error $\mathbb{E}\big[(\widehat q(\rvb)-q(\rvb))^2\big]$ from learning $q(\rvb)=p(\hat y=1\mid\rvb)$ on unlabeled target data, and
(ii) the squared errors $(\hat h_0-h_0)^2+(\hat h_1-h_1)^2$ from estimating the pseudo‐label noise rates on sources.
Both terms are scaled by $(h_0+h_1-1)^{-2}$, so a more informative pseudo‐labeler (larger $J=h_0+h_1-1$) yields weaker error amplification.
As unlabeled target size $N$ grows and source labels $m$ increase, these components shrink (\,$\mathbb{E}[(\widehat q-q)^2]=O(1/N)$, $(\hat h_i-h_i)^2=O(1/m)$\,), making the right–hand side small.
Consequently, whenever this upper bound is smaller than the source‐only risk $\mathcal R_{(b)}(f^\star)$ (equivalently, the domain gap), the corrected predictor $\widehat f$ is strictly better on the target bias subproblem.

\section{Multi-class Method}

\label{section:multi-class}

Suppose we have \( K \geq 2 \) classes. We "one-hot encode" these classes, so that \( y \) takes values in the set
\(\calY = \{(1,0,...,0), (0,1,0,...,0), ..., (0,...,0,1)\} \subseteq \{0,1\}^K.\)
In the training stage,
let \(\hat{y}=f_{\mathbf{c}}(\mathbf{c})\) \(\epsilon \in [0, 1]^{\calY \times \calY}\) 
with
\[\epsilon_{y_i,y_j} = p[\hat y = y_i \mid y = y_j]\]
denote the class-conditional confusion matrix of the pseudo-labels. Then, we have

\begin{equation}
\begin{aligned}
    & \frac{p(y = y_i \mid \mathbf{c}, \mathbf{b})}{p(y \neq y_i \mid \mathbf{c}, \mathbf{b})} \\ 
    &= \frac{p(\mathbf{c}, \mathbf{b}, y = y_i)}{\sum_{y_j \neq y_i} p(\mathbf{c}, \mathbf{b}, y = y_j)} \\
    &= \frac{p(\mathbf{c}, \mathbf{b} \mid y = y_i)p(y = y_i)}{\sum_{y_j \neq y_i} p(\mathbf{c}, \mathbf{b} \mid y = y_j)p(y = y_j)}
    \\  
    &  \quad \text{consider } \mathbf{c} \indep \mathbf{b} \mid y : \\ 
    & = \frac{p(\mathbf{c} \mid y = y_i)p(\mathbf{b}\mid y = y_i)p(y = y_i)}{\sum_{y_j \neq y_i} p(\mathbf{c} \mid y = y_j)p(\mathbf{b}\mid y = y_j)p(y = y_j)} \\
    & = \frac{p(y = y_i \mid \mathbf{b})p(\mathbf{c} \mid y = y_i)}{\sum_{y_j \neq y_i} p(y = y_j \mid \mathbf{b})p(\mathbf{c} \mid y = y_j)} \\
    & = \frac{p(y = y_i \mid \mathbf{b})\frac{p(y = y_i \mid \mathbf{c})}{p(y = y_i)}}{\sum_{y_j \neq y_i} p(y = y_j \mid \mathbf{b})\frac{p(y = y_j \mid \mathbf{c})}{p(y = y_j)}}
\end{aligned}
\label{eq:multi_class_equation}
\end{equation}

for \( P \in R ^ \calY \) defined by
    \[P_{y_i} = \frac{ p (y = y_i \mid \mathbf{b}) p(y = y_i \mid \mathbf{c})}{ p (y = y_i)}
      \quad \text{ for each } y_i \in \calY.\]

By using the logit method, we obtain:

\begin{equation}
\begin{aligned}
    \logit (p(y = y_i \mid \mathbf{c}, \mathbf{b}))
    &= \log\left(\frac{P_{y_i}}{\sum_{y_j \neq y_i} P_{y_j}}\right)  \\
    &=\logit\left(\frac{P_{y_i}}{\| P \|_1}\right)
\end{aligned}
\label{eq:logit_equation}
\end{equation}

Applying the sigmoid function to each side, we have
\[
p (y = y_i \mid \mathbf{c}, \mathbf{b}) = \frac{P}{\|P\|_1}.
\]

Like binary classification, we only need to estimate \( p (y = y_i\mid \mathbf{b}, e) \) since we assume that \( p(y = y_i \mid \mathbf{c}) \) and \( p (y = y_i) \) can be directly used.

\begin{equation}
    \begin{aligned}
        &E[\hat y \mid \mathbf{b}] \\
    & = \sum_{y_i \in \calY} E[\hat y \mid y = y_i, \mathbf{b}] p[y = y_i \mid \mathbf{b}]\\
    & \hat{y} \textbf{ only depends on } y \\
    & = \sum_{y_i \in \calY} E[\hat y \mid y = y_i] p[y = y_i \mid \mathbf{b}]\\
    & = \epsilon E[y \mid \mathbf{b}]
    \end{aligned}
\end{equation}

When \( \epsilon \) is non-singular, this has the unique solution \( E[y \mid \mathbf{b}] = \epsilon^{-1} E[\hat y \mid \mathbf{b}] \),
giving a multiclass equivalent. However, it is numerically more stable to estimate \( E[y \mid \mathbf{b}] \) by the least-squares solution.

\[
\operatorname{argmin}_{p \in \Delta^\calY} \left\| \epsilon p - E[\hat y \mid \mathbf{b}] \right\|_2.
\]

\section{Algorithms}

\subsection{Binary Bias-Aware Generalization}
\label{binary_bag}
\begin{algorithm}[H]
\caption{Bias-Aware Generalization}
\label{alg:proposed_method_short}
\begin{algorithmic}[1]
   \STATE \textbf{Input:} Source data $(\rvx^{\mathcal{S}_i},\rvy^{\mathcal{S}_i})_{i=1}^{M}$; hyper-parameters $\lambda_0,\lambda_1$,
   % \STATE \textbf{Output:} Trained models $f_c$ (domain-invariant) and $f_s$ (domain-specific)
    \STATE: \noindent In the following content  $\mathcal{L}$ means loss, $f$ means classier, $e_k$ is the domain embedding, $Pr$ means logits prior, \( \logit ( p(y))\).
   \STATE \textbf{Stage 1: Source Training}
   \STATE Obtain $[\mathbf{c},\mathbf{b}]$ via feeding $\rvx$ into encoder. 
   \STATE Reconstruct $\hat{\rvx}$.
   \STATE Compute VAE loss $\mathcal{L}_\text{vae}$.
   \STATE Calculate the results of the invariant predictor 
   {   $f_c{(\rvc)}$}
.
   \STATE Calculate the bias predictor,
   obtain the final prediction.
   \STATE Compute the classification loss $\mathcal{L}_\text{cls} $.
   \STATE Add $\mathcal{L}_\text{ind}$ to enforce $\mathbf{c} \perp\!\!\!\perp \mathbf{b} \mid y$.
   \STATE Compute $\mathcal{L}_\text{all} = \mathcal{L}_\text{cls} + \lambda_0 \mathcal{L}_\text{vae} + \lambda_1 \mathcal{L}_\text{ind}$ and optimize parameters.
   \STATE For source data, calculate $h_0=P(\hat{y}=0\mid y=0), h_1=P(\hat{y}=1\mid y=1)$ using $\hat{y} =f_c(\mathbf{c})$.
   \STATE \textbf{Stage 2: Test-Time Adaptation}
   \STATE Obtain $(\mathbf{c}^{\mathcal{T}},\mathbf{b}^{\mathcal{T}})$ with pre-trained encoders.
   \STATE Generate pseudo-label as $\hat{y}^{\mathcal{T}}=f_c(\mathbf{c}^{\mathcal{T}})$.
   \STATE Fine-tune bias predictor $f_b$ with $(\mathbf{b}^{\mathcal{T}},\hat{y})$ 
   \STATE Correct the the results of bias predictor as $\phi(\tilde{f}_{b}(\mathbf{b}^{\mathcal{T}})) = \logit ((\frac{\sigma((\tilde{f}_{b}(\mathbf{b}^{\mathcal{T}})))+h_0-1}{h_0+h_1-1}))$ ,
   \STATE Make the final prediction. 
\end{algorithmic}
\label{alg}
\end{algorithm}

\subsection{Multi-Class Bias-Aware Generalization}
\label{multi-bag}
\begin{algorithm}[H]
\caption{Multi-Class Bias-Aware Generalization}
\label{alg:proposed_method_short——2}
\begin{algorithmic}[1]
   \STATE \textbf{Input:} Source data $(\rvx^{\mathcal{S}_i},\rvy^{\mathcal{S}_i})_{i=1}^{M}$; hyper-parameters $\lambda_0,\lambda_1$,
   % \STATE \textbf{Output:} Trained models $f_c$ (domain-invariant) and $f_s$ (domain-specific)
    \STATE: \noindent In the following content  $\mathcal{L}$ means loss, $f$ means classier, $e_k$ is the domain embedding
   \STATE \textbf{Stage 1: Source Training}
   \STATE Obtain $[\mathbf{c},\mathbf{b}]$ via feeding $\rvx$ into encoder. 
   \STATE Reconstruct $\hat{\rvx}$.
   \STATE Compute VAE loss $\mathcal{L}_\text{vae}$.
   \STATE Calculate the results of the invariant predictor {   $f_c{(\rvc)}$}.
   \STATE Calculate the bias predictor, obtain the final prediction as \eqref{eq:logit_equation}.
   \STATE Compute the classification loss $\mathcal{L}_\text{cls} $.
   \STATE Add $\mathcal{L}_\text{ind}$ to enforce $\mathbf{c} \perp\!\!\!\perp \mathbf{b} \mid y$.
   \STATE Compute $\mathcal{L}_\text{all} = \mathcal{L}_\text{cls} + \lambda_0 \mathcal{L}_\text{vae} + \lambda_1 \mathcal{L}_\text{ind}$ and optimize parameters.
   \STATE For source data, calculate Probability Confusion Matrix $\epsilon$
   \STATE \textbf{Stage 2: Test-Time Adaptation}
   \STATE Obtain $(\mathbf{c}^{\mathcal{T}},\mathbf{b}^{\mathcal{T}})$ with pre-trained encoders.
   \STATE Generate pseudo-label as $\hat{y}^{\mathcal{T}}=f_c(\mathbf{c}^{\mathcal{T}})$.
   \STATE Fine-tune bias predictor $f_b$ with $(\mathbf{b}^{\mathcal{T}},\hat{y})$ 
   \STATE Correct the the results of bias predictor by least squares using $\epsilon$ according \cref{section:multi-class}
   \STATE Make the final prediction as \eqref{eq:logit_equation}. 
\end{algorithmic}
\label{alg:mul}
\end{algorithm}

\section{Experiment details and ablation Study}

\subsection{Experiment details}
\label{exe_details}

\paragraph{PACS}  
We evaluate our BAG model on the PACS dataset, which comprises four distinct domains (Photo, Art painting, Cartoon, Sketch) and seven object categories. Following the standard leave-one-domain-out protocol, we train on three domains and test on the held-out domain. Our backbone is a ResNet-18 pretrained on ImageNet, used to extract visual features. On top of this, we attach a variational autoencoder that separates causal factors from spurious factors, and we introduce learnable domain embeddings to encourage independence across environments. All key hyperparameters—such as the dimensions of the latent representations, the relative weighting between reconstruction and regularization losses, and the learning rates—are chosen via Bayesian optimization, with test-domain accuracy serving as the selection criterion. We train with the Adam optimizer for 70 epochs and select the final model based on its performance under test-time augmentation on the held-out domain. We used the code based on \cite{jiang2022invariant}.

\paragraph{Office-Home}  
We evaluate our BAG model on the Office-Home dataset \cite{office}, which comprises four domains (Art, Clipart, Product, Real-World) and 65 object categories. Following the leave-one-domain-out protocol, we train on three domains and test on the held-out one. Hyperparameters for each held-out domain are optimized via Bayesian optimization; the selected values for learning rate, regularization weight, number of epochs for the $z_s$ classifier, and VAE weight are given in Table~\ref{tab:officehome_hyperparams}. The full code are provided in the supplementary material.

\begin{table}[ht]
  \centering
  \caption{Optimal hyperparameters on Office-Home}
  \label{tab:officehome_hyperparams}
  \begin{tabular}{lcccc}
    \toprule
    Domain         & Learning Rate & $\lambda_1$ & \# epochs (test time) & $\lambda_0$ \\
    \midrule
    Art (Ar)       & $5.7\times10^{-5}$ & 0.030 & 3 & $1.2\times10^{-4}$ \\
    Clipart (Cl)   & $6.6\times10^{-5}$ & 0.012 & 3 & $2.1\times10^{-5}$ \\
    Product (Pr)   & $5.1\times10^{-5}$ & 0.030 & 4 & $2.2\times10^{-4}$ \\
    Real-World (Rw)& $5.0\times10^{-5}$ & 0.046 & 6 & $2.5\times10^{-5}$ \\
    \bottomrule
  \end{tabular}
\end{table}

\paragraph{Computing resources}
We evaluate the BAG method on DomainNet under the leave-one-domain-out protocol, using clipart as the target domain and infograph, painting, quickdraw, real, and sketch as source domains, trained for 30 epochs with a fixed random seed and batch size 64. The experiment is conducted on a single AMD Instinct MI210 GPU (ROCm). Peak GPU memory usage reported by PyTorch is 26.43 GB, and the total wall-clock time is 21,547.08 s (approximately 5 h 59 m 7 s). For SFB under the same protocol and settings (clipart as the target domain, 30 epochs, batch size 64, fixed seed, single AMD Instinct MI210 GPU), the training time is 21,583.46 s (approximately 5 h 59 m 43 s) and the total time is 21,584.40 s (approximately 5 h 59 m 44 s), with a peak PyTorch-allocated memory of 26.35 GB. Compared with the most relevant baseline, BAG achieves lower runtime under similar memory usage; the peak memory is dominated by the ResNet-50 backbone and its training-time activations and optimizer states, which are shared across both methods, so the overall memory footprints are nearly identical.

\subsection{Ablation Study}

\label{exe_ablation}

{
\cref{tab:ab} presents a systematic ablation of BAG on PACS and in our synthetic simulations. The full method comprises two optimization stages and a VAE module. In the ablations, we remove exactly one component at a time, as indicated by a hyphen in the variant name: \textbf{BAG-VAE} drops the \textbf{VAE} in favor of a linear projection; \textbf{BAG–TTA} removes the \textbf{T}est \textbf{T}ime \textbf{A}daptation that uses pseudo labels to guide the bias branch; and \textbf{BAG–RE} removes the \textbf{R}eweight \textbf{E}xperents mechanism. For transparency on the predictive roles of the two branches, \cref{tab:pacs-tta} further reports the behavior of the bias predictor and the content predictor when used in isolation.}

{
Empirically, all BAG variants outperform ERM across datasets, confirming the effectiveness of our bias–utilization design. BAG–RE and BAG–TTA show complementary strengths and weaknesses—consistent with data–specific factors such as domain composition and the severity of distribution shift—whereas BAG–VAE lags behind the full BAG, underscoring the value of accurately identifying and disentangling \(\mathbf{b}\) (bias–aligned factors) and \(\mathbf{c}\) (content–aligned factors). Using only the bias predictor or only the content predictor yields average accuracies of roughly \(78\%\) and \(80\%\), respectively; each alone lacks sufficient information and, without a learnable class prior, is vulnerable to label shift. Once the two predictors are combined under a learnable label prior, performance rises to state–of–the–art levels among the baselines considered.}

{
In summary, the ablations show that each component contributes meaningfully: (i) explicit disentanglement of bias and content, (ii) expert reweighting and test–time adaptation to stabilize routing, and (iii) a compact generative prior via the VAE. Together, these elements enable BAG to deliver robust and consistently strong domain–generalization performance, while simplified variants remain competitive and illuminate the role of each design choice.
}

\begin{table}[tb]
\centering

\vspace{-0.1cm }

\caption{ Accuracy of different variants including \textbf{BAG}, BAG-VAE, BAG-RE, and BAG-TTA on four PACS domains and Synthetic data.} 

\vspace{0.2cm }

\begin{tabular}{c|ccccc|c}
    \toprule
    \multirow{2}{*}{\textbf{Algorithm}} & \multicolumn{5}{c|}{\textbf{PACS}} &Synthetic \\
    \cmidrule(lr){2-7} 
    & P & A & C & S & Avg &Acc\\
    \midrule
    \textbf{BAG} & 96.6 & 86.0 &  77.9& 73.0 & 83.4 & 97.7\\
    \textbf{BAG-VAE} & 95.8 & 84.5 & 78.9 & 66.8 & 81.5 & 94.5\\
    \textbf{BAG-RE} & 96.1 & 82.8 & 78.4 & 69.7 & 81.8 & 77.0\\
    \textbf{BAG-TTA} & 96.3 & 80.0 & 78.8 & 73.0 & 82.0 & 85.5\\
    \bottomrule
\end{tabular}
\vspace{0.5cm}
\label{tab:ab}
\end{table}

% 前置：
% \usepackage{xcolor,booktabs}
% \newcommand{}[1]{\textcolor{red}{#1}}

\begin{table}[tb]
\centering
\small
\caption{{Different parts performance results on PACS. Values are accuracies (\%).}}
\label{tab:pacs-tta}
\vspace{-0cm}
\begin{tabular}{l|cccc|c}
\toprule
{Method} & {P} & {A} & {C} & {S} & {Avg} \\
\midrule
{Content predictor Only } & {95.57} & {81.10} & {75.00} & {67.83} & {79.88} \\
{Bias predictor Only }    & {94.01} & {73.58} & {78.67} & {66.23} & {78.12} \\
{BAG}            & {96.29} & {83.69} & {80.29} & {76.05} & {84.08} \\
\bottomrule
\end{tabular}
\vspace{-5pt}
\end{table}

\subsection{{Sensitivity of experts and embeddings }}

\label{sec:sen of em}

% Packages needed:
% \usepackage{xcolor,booktabs,makecell}

\renewcommand\arraystretch{1.2}

% -------- Table A: fix experts = 3, sweep embedding dimension d --------
\begin{table}[htbp]
\centering
\caption{{Sensitivity with fixed experts \(E{=}3\) and varying embedding dimension \(d\). Column-wise best in \textbf{bold}. Values are accuracies (\%).}}
\label{tab:exp_e3_d_sweep}
\begin{tabular}{c c c c c c}
\toprule
{$Ds$} & {$P$} & {$A$} & {$C$} & {$S$} & {average} \\
\midrule
{5}  & {95.81} & {84.03} & {78.71} & {70.73} & {82.32} \\
{10} & {96.29} & {83.69} & {\textbf{80.29}} & {\textbf{76.05}} & {\textbf{84.08}} \\
{15} & {\textbf{96.53}} & {\textbf{84.08}} & {69.24} & {71.54} & {80.35} \\
{20} & {96.23} & {83.30} & {75.85} & {76.00} & {82.85} \\
\bottomrule
\end{tabular}
\end{table}

% -------- Table B: fix embedding dimension d = 10, sweep experts E --------
\begin{table}[htbp]
\centering
\caption{{Sensitivity with fixed embedding dimension \(d{=}10\) and varying number of experts \(E\). Column-wise best in \textbf{bold}. Values are accuracies (\%).}}
\label{tab:exp_d10_e_sweep}
\begin{tabular}{c c c c c c}
\toprule
{$E$} & {$P$} & {$A$} & {$C$} & {$S$} & {average} \\
\midrule
{1}  & {95.51} & {83.11} & {75.00} & {74.65} & {82.07} \\
{3}  & {96.29} & {83.69} & {\textbf{80.29}} & {\textbf{76.05}} & {\textbf{84.08}} \\
{4}  & {95.27} & {83.59} & {77.52} & {68.92} & {81.33} \\
{5}  & {95.51} & {84.28} & {75.13} & {73.15} & {82.02} \\
{6}  & {96.23} & {\textbf{85.06}} & {73.98} & {73.91} & {82.29} \\
{7}  & {95.45} & {84.08} & {74.62} & {73.63} & {81.94} \\
{10} & {\textbf{96.35}} & {83.98} & {78.92} & {73.63} & {83.22} \\
\bottomrule
\end{tabular}
\end{table}

% -------- Updated discussion to match the two tables --------
\noindent
{\textbf{Is the environmental predictor necessary, or could one directly predict the label with similar performance?}
Yes, the environmental predictor is necessary. Directly predicting \(y\) from bias–aligned features \(b\) effectively fits \(p(y\mid b)\), which is brittle under label shift: when \(p(y)\) changes across domains, the conditional \(p(y\mid b)\) drifts, and target performance degrades. Our approach explicitly estimates a latent environment and routes each example to domain–conditioned experts, separating environment–specific variation from task–relevant signals and improving robustness. As shown in \cref{tab:ab}, removing the environmental predictor (method \textbf{BAG–RE}) reduces the mean accuracy to \(81.8\%\), below that of the full model. In our hyperparameter sweeps (Tables~\ref{tab:exp_e3_d_sweep} and \ref{tab:exp_d10_e_sweep}), most configurations that include the predictor exceed \(81.8\%\) on average, and the best setting with \(E{=}3, d{=}10\) reaches \(84.08\%\) with strong performance on the challenging domains \(C{=}80.29\%\) and \(S{=}76.05\%\). In summary, conditioning the classifier on an explicitly predicted environment addresses distribution shift and yields measurable, domain–wise robust gains over direct prediction.}

\medskip

\noindent
{\textbf{How sensitive is performance to the number of experts and the embedding dimensionality?}
Sensitivity is non–monotonic in both factors. With experts fixed at \(E{=}3\) (Table~\ref{tab:exp_e3_d_sweep}), a medium environment embedding performs best: \(d{=}10\) attains the highest average \(84.08\%\), whereas a smaller embedding underfits (\(d{=}5:\ 82.32\%\)). With embedding fixed at \(d{=}10\) (Table~\ref{tab:exp_d10_e_sweep}), increasing experts from \(E{=}1\) to \(E{=}3\) improves the average from \(82.07\%\) to \(\textbf{84.08}\%\); further increases provide no consistent gains (e.g., \(E{=}10:\ 83.22\%\)). A few well–specialized experts (\(E{\approx}3\)) together with a compact environment embedding (\(d{\approx}10\)) strike a reliable balance between expressivity and out–of–domain generalization. To summarize: BAG is not sensitive to these two parameters, and the performance is within a controllable range; even suboptimal parameter choices do not push it below baselines such as ERM, IRM and ACTIR.}

\subsection{{Disentanglement of Latent Variables}}
\label{sec:disen}
{We qualitatively examine the learned disentanglement on PACS by visualizing attributions for the content branch $\rvc$ and the bias branch $\rvb$ with Grad-CAM. Concretely, we reuse the trained checkpoint and the identical backbone as training. Given an input image $x$, we route it through the original BAG model but expose two thin wrappers that return the logits of the $\rvc$ head ($f_\beta$) and the $\rvb$ head ($f_\eta$), respectively. For Grad-CAM, we target the last residual block of \texttt{layer4} in the backbone. We then compute two grayscale attribution maps: one for $\rvc$ by backpropagating through $f_\beta$, and one for $\rvb$ by backpropagating through $f_\eta$, each with \texttt{ClassifierOutputTarget} set to the ground-truth class.}

{To make the overlays visually comparable across images, we (i) bilinearly resize each Grad-CAM map to the input resolution; (ii) min–max normalize it to $[0,1]$; and (iii) render red intensity for $\rvc$ and blue intensity for $\rvb$ before alpha-blending with the RGB image (default $\alpha=0.5$). For each sample we compose a four-tile horizontal panel in the fixed order: original image, $\rvb$ (blue), $\rvc$ (red), and the composite overlay; the titles \texttt{b}/\texttt{c} are boldfaced to reduce ambiguity. }

{Across the shown examples, $\rvc$ consistently concentrates on class-relevant object regions, whereas $\rvb$ emphasizes background textures, styles, or peripheral cues. These observations align with the intended decomposition and support the use of $\rvc$ as content and $\rvb$ as environment/style bias in subsequent modules.}

\begin{figure}[tb]
    \centering
    \vspace{-0.1cm}
    \includegraphics[width=\textwidth]{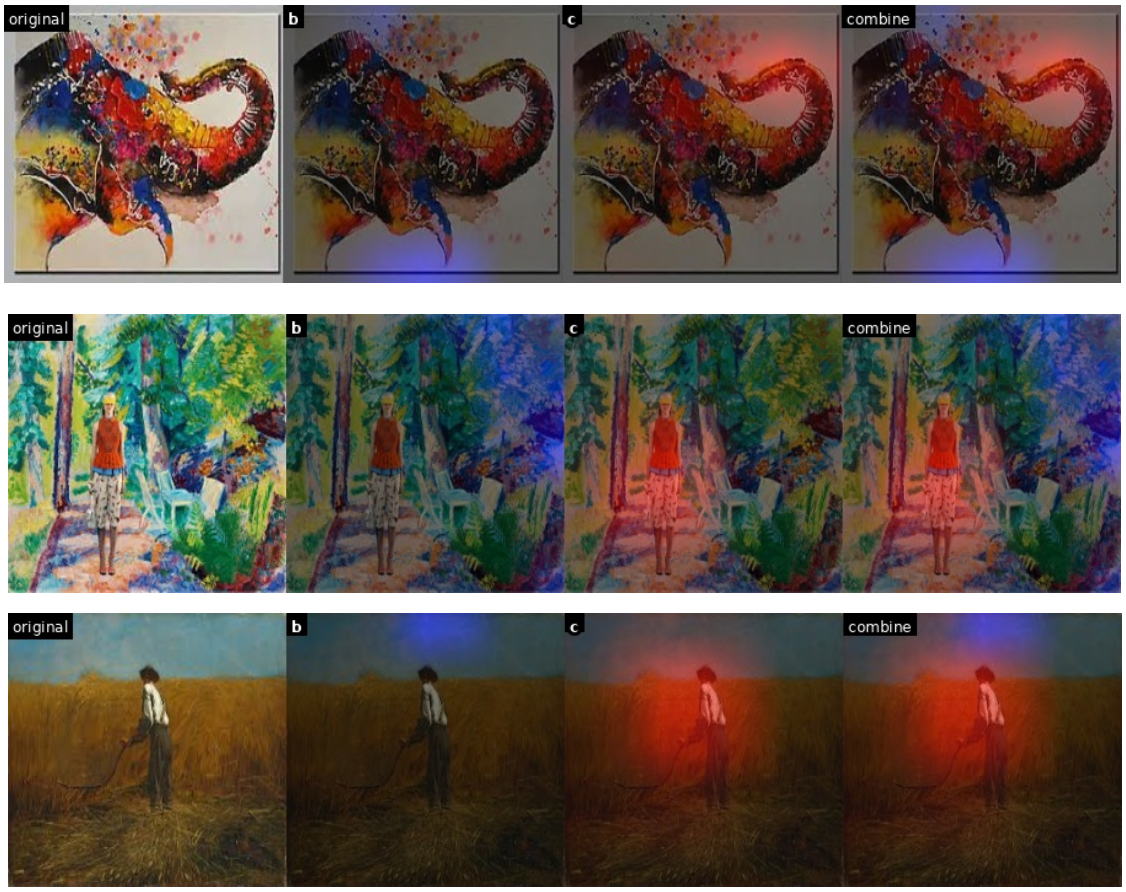}
    \vspace{-0.6cm}
    \caption{{\textbf{PACS disentanglement via Grad-CAM.} For each example, we show (left$\rightarrow$right): original image; $\rvb$ heatmap (blue) obtained by targeting the bias head logits; $\rvc$ heatmap (red) obtained by targeting the content head logits; and the composite overlay (intensity reflects attribution strength).The figure displays representative, high-separation cases where $\rvc$ focuses on object pixels while $\rvb$ emphasizes background/style cues, evidencing effective content–bias separation.}}
    \label{fig:via}
    \vspace{-0.3cm}
\end{figure}

% \subsection{{Disentanglement of Latent Variables}}
% {We qualitatively evaluate the disentanglement learned on the PACS dataset. Using the \texttt{grad\_cam} tool, we compute Grad-CAM visualizations for the content branch $\rvc$ and the bias branch $\rvb$ to localize the pixels that most influence each latent. For each image in \cref{fig:via}, the panels from left to right are: the original image, the $\rvb$ attribution (blue), the $\rvc$ attribution (red), and the composite overlay. Across four representative examples (classes: {elephant}, {person}, {person}, {person}), $\rvc$ consistently focuses on the object regions, whereas $\rvb$ attends primarily to background textures or peripheral cues. These observations indicate that our feature decomposition effectively separates class-relevant content from  bias, enabling its principled use in subsequent modules.}

% \begin{figure}[tb]
%     \centering
%     \vspace{-0.1cm}
%     \includegraphics[width=\textwidth]{figures/vie1.pdf}
%     \vspace{-0.7cm}
%     \caption{{\textbf{Content--bias disentanglement on PACS via Grad-CAM.} From left to right: (1) original image; (2) $\rvb$ heatmap in blue; (3) $\rvc$ heatmap in red; (4) composite overlay of $\rvc$ and $\rvb$ (intensity indicates attribution strength). In the four shown examples (classes: {elephant}, {person}, {person}, {person}), $\rvc$ highlights object pixels while $\rvb$ emphasizes background or weak object cues, supporting the effectiveness of the proposed feature decomposition.}}
%     \label{fig:via}
%     \vspace{-0.5cm}
% \end{figure}

\section{Statement}

This work contributes to the advancement of machine learning by challenging conventional out‐of‐distribution (OOD) approaches and introducing a framework that strategically leverages bias rather than eliminating it. On the positive side, our methods can improve model robustness when deployed in naturally biased environments—e.g., medical imaging systems that must generalize across hospitals with different demographic distributions. By revealing how bias can be harnessed as a constructive signal, we hope to inspire new lines of research into principled bias‐aware generalization across diverse domains. Because this is foundational research evaluated on controlled benchmarks and synthetic datasets, we do not identify any immediate negative societal impacts.

% \section{Technical Appendices and Supplementary Material}
% Technical appendices with additional results, figures, graphs and proofs may be submitted with the paper submission before the full submission deadline (see above), or as a separate PDF in the ZIP file below before the supplementary material deadline. There is no page limit for the technical appendices.

%%%%%%%%%%%%%%%%%%%%%%%%%%%%%%%%%%%%%%%%%%%%%%%%%%%%%%%%%%%%

\newpage

\end{document}